\runningauthor{V. Borovitskiy, M. R. Karimi, V. R. Somnath, A. Krause}
\newcommand{\pd}{\mathop{\dot{+}}}
\newcommand{\md}{\mathop{\dot{-}}}
\DeclareMathOperator{\Sym}{S}
\newcommand{\cayley}{\mathrm{Cayley}}
\begin{document}

\pagenumbering{arabic}

\twocolumn[

\aistatstitle{Isotropic Gaussian Processes on Finite Spaces of Graphs}

\aistatsauthor{ Viacheslav Borovitskiy\textsuperscript{\ensuremath{*1}} \And Mohammad Reza Karimi\textsuperscript{\ensuremath{*1}} \And Vignesh Ram Somnath\textsuperscript{\ensuremath{*1,2}} \And Andreas Krause\textsuperscript{\ensuremath{1}}}

\aistatsaddress{$^{1}$ Learning \& Adaptive Systems Group, Department of Computer Science, ETH Z\"urich, Switzerland
\\
$^{2}$IBM Research Z\"urich, Switzerland}]

\begin{abstract}
We propose a principled way to define Gaussian process priors on various sets of unweighted graphs: directed or undirected, with or without loops.
We endow each of these sets with a geometric structure, inducing the notions of closeness and symmetries, by turning them into a vertex set of an appropriate metagraph.
Building on this, we describe the class of priors that respect this structure and are analogous to the Euclidean isotropic processes, like squared exponential or Mat\'ern.
We propose an efficient computational technique for the ostensibly intractable problem of evaluating these priors' kernels, making such Gaussian processes usable within the usual toolboxes and downstream applications.
We go further to consider sets of equivalence classes of unweighted graphs and define the appropriate versions of priors thereon.
We prove a hardness result, showing that in this case, exact kernel computation cannot be performed efficiently.
However, we propose a simple Monte Carlo approximation for handling moderately sized cases.
Inspired by applications in chemistry, we illustrate the proposed techniques on a real molecular property prediction task in the small data regime.
\end{abstract}

% By Slava. Without this the headers are not shown for some reason.
\makeatletter
\fancyhead[CE]{\small\bfseries\@runningtitle}
\fancyhead[CO]{\small\bfseries\@runningauthor}
\makeatother

\section{Introduction}

Gaussian processes provide a principled framework to assess and quantify uncertainty, making them useful in various applications, e.g., in optimization \cite{snoek2012}, active \& reinforcement learning \cite{krause2007,deisenroth2011}.

Traditionally, Gaussian processes are applied to model functions $f: X \to \R$ where $X = \R^n$ is a Euclidean space.
However, many applications require modeling functions on different domains~$X$.
The main ingredient needed for this is defining a natural Gaussian process prior on $X$.
It should respect the geometric structure of $X$ and, at the same time, be fairly general-purpose in its nature.

In the Euclidean case, applications often rely on \emph{isotropic} priors, i.e., priors whose distribution is invariant with respect to translations and rotations, like squared exponential (RBF, Gaussian) or Mat\'ern Gaussian processes.\footnotemark
When conditioning such a prior by a translated and rotated dataset, the resulting model is transformed accordingly.

\begin{table}[b!]
\vspace*{-2.575ex}
\footnotesize\urlstyle{same}\textsuperscript{\ensuremath{*}}Equal contribution. Mail to: \href{mailto:viacheslav.borovitskiy@gmail.com}{viacheslav.borovitskiy@gmail.com} \\* Code available at: \url{https://github.com/vsomnath/graph_space_gps}. \\* Mirrored at \url{https://github.com/IBM/graph_space_gps}. 
\vspace*{3.425ex}
\end{table}

Extending this notion of isotropy to non-Euclidean $X$ has been a subject of recent work. 
For instance, 
\textcite{borovitskiy2020,borovitskiy2021} and \textcite{azangulov2022, azangulov2023} consider $X$ that is a Riemannian manifold or a vertex set of an undirected graph, where the notion of isotropy is substituted with invariance to Riemannian isometries or graph automorphisms, respectively.
Relying on Bochner's theorem-like constructions, they define appropriate priors and study (approximate) computational routines necessary for using them in various applications, e.g., in robotics \cite{jaquier2022} or wind speed modeling \cite{hutchinson2021}.

Following this principled paradigm, and motivated by applications in natural sciences, e.g. in chemistry, in this work, we consider domains $X$ that are sets of \mbox{unweighted} graphs on $n$ vertices (directed or undirected, with or without loops).
We endow these sets with appropriate geometric structure, turning them into \emph{spaces}, and \emph{derive} the generalized notion of isotropic Gaussian processes thereon.
We obtain, as special cases, the analogs of the landmark Mat\'ern and squared exponential Gaussian processes.
This, however, leaves us with ostensibly intractable kernels which we \emph{make tractable} by leveraging a finer structure of the setting.

\begin{figure*}
\begin{subfigure}{0.5\textwidth}
\centering
\includegraphics[height=9ex]{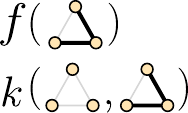}
\end{subfigure}
\begin{subfigure}{0.5\textwidth}
\centering
\includegraphics[height=10ex]{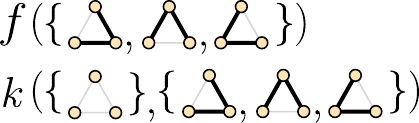}
\end{subfigure}
\caption{We study Gaussian processes $f$ and their respective covariance kernels $k$ in two settings.
First, $f$ that take \emph{graphs} as inputs (left).
Second, $f$ that take \emph{equivalence classes of graphs}, e.g., isomorphism classes of graphs, as inputs (right).}
\label{fig:setting}
\end{figure*}

We further consider a natural extension of the previous setting, whereby $X$ is now a set of equivalence classes of graphs under some permutation-induced equivalence relation, e.g., the set of graph isomorphism classes. One realistic use case of this arises when using graphs to encode molecules by associating nodes to atoms and edges to chemical bonds. Here, nodes corresponding to the same kind of atom are interchangeable while nodes corresponding to different types of atoms are not, calling for an equivalence class representation of a molecule, rather than for a graph representation.
We illustrate the two different settings we study in~\Cref{fig:setting}.

\footnotetext{Note: ARD versions \cite[page 106]{rasmussen2006} of Mat\'ern and squared exponential Gaussian processes are \emph{stationary} but not isotropic since they are not rotation invariant.}

To handle this setting, we propose \emph{projecting} the previously defined Gaussian process priors to make them into piecewise constant random functions on graph equivalence classes which we call $\emph{invariant versions}$.
We prove a hardness result that suggests that exact evaluation of kernels of the invariant versions cannot be done in any computationally efficient way, and suggest using straightforward Monte Carlo approximation to handle moderately sized problems.

To lend empirical support to our theoretical contributions, we evaluate the proposed methods on a real molecular property prediction task that mimics a typical application setting for Gaussian processes. We also consider a smaller subset of the same dataset where we can exactly evaluate the \emph{projected} kernels, i.e. the kernels of invariant versions, further demonstrating the utility of such geometrically structured priors when learning from limited data. 

\subsection{Gaussian Processes}

Gaussian processes \cite{rasmussen2006} are used as nonparametric probabilistic models for learning unknown functions.
A Gaussian process $f \sim \f{GP}(m, k)$ is a random function from some domain $X$ to reals.
Its distribution is determined by its \emph{mean function} $m(x) = \E f(x)$ and its \emph{covariance kernel} $k(x, x') = \Cov\del{f(x), f(x')}$, where the latter is necessarily \emph{positive semidefinite}.

Given a zero-mean Gaussian process prior $f \sim \f{GP}(0, k)$ and some data $x_1, y_1, \ldots, x_n, y_n$ with $x_i \in X, y_i \in \R$, one popular setting is to assume a Bayesian model $y_i = f(x_i) + \varepsilon_i$ with observations contaminated by i.i.d. noise $\varepsilon_i \sim \f{N}(0, \sigma^2_{\varepsilon})$.
If we denote $\v{y} = (y_1, \ldots, y_n)^{\top}$, this leads to posterior predictive $f \given \v{y} \sim \f{GP}(\hat{m}, \hat{k})$ given by pathwise conditioning \cite{wilson2020, wilson2021}
\[ \label{eqn:matheron}
f \given \v{y}\,(\cdot)
=
f(\cdot)
+
\m{K}_{\cdot \v{x}}
\del{\m{K}_{\v{x} \v{x}} + \sigma_{\varepsilon}^2 \m{I}}^{-1}
\del{\v{y} - f(\v{x})},
\]
where $\v{x}$ is defined similarly to $\v{y}$, $\m{I}$ is the identity matrix, $f(\v{x}) = (f(x_1), \ldots, f(x_n))^{\top}$,  and $\m{K}_{\v{a} \v{b}}$ is the matrix with elements $\del{\m{K}_{\v{a} \v{b}}}_{i j} = k(a_i, b_j)$.
Posterior moments $\hat{m}, \hat{k}$ are easily inferred to be
\[
\label{eqn:gp_posterior_moments_mean}
\hat{m}\,(\cdot)
&=
\m{K}_{\cdot \v{x}}
\del{\m{K}_{\v{x} \v{x}} + \sigma_{\varepsilon}^2 \m{I}}^{-1}
\v{y},
\\
\label{eqn:gp_posterior_moments_variance}
\hat{k}\,(\cdot, \cdot')
&=
k(\cdot, \cdot')
-
\m{K}_{\cdot \v{x}}
\del{\m{K}_{\v{x} \v{x}} + \sigma_{\varepsilon}^2 \m{I}}^{-1}
\m{K}_{\v{x} \cdot'}.
\]
The posterior mean function $\hat{m}(x)$ evaluated at $x \in X$ is the prediction at $x$, while the posterior standard deviation $\hat{k}(x, x)^{1/2}$ therein represents the respective uncertainty.

In order to use Gaussian processes in downstream applications, one needs a suitable prior and an inference algorithm.
For regression with Gaussian noise, the latter is given by~\Cref{eqn:matheron,eqn:gp_posterior_moments_mean,eqn:gp_posterior_moments_variance}.
In other settings, e.g., in classification, inference is not so straightforward, but is well-studied \cite{hensman2015, blei2017}.
Crucially, inference algorithms transfer to new domains in a straightforward way if kernel pointwise evaluation and prior sampling are available thereon.
We thus concentrate on building natural priors for Gaussian processes on various finite sets of graphs and their equivalence classes, which we proceed to discuss in the following section.

\subsection{Finite Spaces of Graphs and Equivalence Classes}
\label{sec:intro:input_spaces}

We study various sets of unweighted graphs on $n$ nodes.
Every such graph may be represented by its adjacency matrix or, after flattening to a vector, by an element of the set $\cbr{0, 1}^d$ with an appropriate~$d$.
For example, $d=n^2$ for the set of directed graphs with loops, which we denote by $\c{DL}_n$, where $\c{D}$ stands for \emph{directed} and $\c{L}$ for \emph{loops}.

We also consider the sets of undirected graphs with loops~(denoted by $\c{UL}_n$, where $\c{U}$ stands for \emph{undirected}), directed graphs without loops~(denoted by $\c{D}_n$) and undirected graphs without loops~(denoted by $\c{U}_n$).
Taking into account the structure of their adjacency matrices, these can be regarded as sets $\cbr{0, 1}^d$ with $d = n(n+1)/2$, ${d = n (n-1)}$ and $d = n(n-1)/2$, respectively.
We turn these sets into \emph{spaces} in~\Cref{sec:graph_gps}, by endowing them with an appropriate geometric structure.

We also consider sets of equivalence classes of graphs.
Extreme examples of these are isomorphism classes of graphs.
To define these, consider the group $\Sym_n$ of permutations of a size $n$ set.{\interfootnotelinepenalty=10000\footnote{For a discussion on the topic of groups we refer the reader to~\textcite{kondor2008} and \textcite{robinson2003}.}}
A permutation $\sigma \in \Sym_n$ acts on a graph~$x$ with $n$ nodes and adjacency matrix~$\m{A}_x$ returning the graph $y = \sigma(x)$ whose adjacency matrix $\m{A}_y$ has rows and columns permuted by $\sigma$.
Graphs $x$ and $y$ are called \emph{isomorphic} (denoted by ${x \isom y}$) if and only if there exists $\sigma \in \Sym_n$ such that $\sigma(x) = y$.
This defines an equivalence relation on any set $\c{V}$ of graphs and thus defines the equivalence classes (called \emph{graph isomorphism classes}): for $x \in \c{V}$ its $\isom$-equivalence class is $\overline{x} = \cbr{z \in \c{V} : z \isom x}$.
The set of such equivalence classes we denote by $\c{V}_{/\isom} = \cbr{\overline{x} : x \in \c{V}}$.

We also consider sets of equivalence classes defined by general equivalence relations determined by various subgroups $H \subseteq \Sym_n$.
Such a subgroup $H$ defines the equivalence relation $\sim_H$ on any set $\c{V}$ of graphs where $x \sim_H y$ if and only if there exists $\sigma \in H$ such that $\sigma(x) = y$.
Obviously, $\sim_{\Sym_n}$ is equal to the relation $\isom$ considered above.

\subsection{Previous Work and Contribution}

Defining a zero-mean Gaussian process prior amounts to defining a kernel. Kernels on various sets of graphs have been under consideration for a long time, see the recent surveys by \textcite{nikolentzos2021} and \textcite{kriege2020}.
The focus in previous work, however, is usually shifted towards other settings, most notably the one where node and edge features are more significant than the graphs' topology.
Moreover, the respective kernels are usually based on heuristics or aimed for a specific use case. To our best knowledge, principled general purpose kernels, like Mat\'ern kernels on graph spaces, have not been considered so far. 

Perhaps closest to our work is the piece in \textcite{kondor2002} about the hypercube diffusion kernel.
As we show later, this is the kernel of the squared exponential prior that we propose in~\Cref{sec:graph_gps}.
However, their kernel was studied without graph spaces in mind, and tractability was ensured using techniques that do not generalize to other isotropic kernels we study in~\Cref{sec:graph_gps}.

Kernels on graph isomorphism classes were studied before as well, see, e.g., \textcite{shervashidze2011}.
Notably, \textcite{gartner2003} prove that every strictly positive definite kernel of this sort cannot be computed exactly in an efficient way.
Our no-go results of~\Cref{sec:equiv_gps} are similar.

\emph{Our main contributions} are the following: (1) we propose a principled way of defining Gaussian process priors on finite graph spaces, endowing graph sets with geometric structure, thus making them into \emph{spaces}; (2) we make the respective ostensibly intractable kernels tractable; (3) we propose a principled way of projecting these priors on various sets of graph equivalence classes; (4) we prove a hardness result about the kernels between equivalence classes forbidding efficient \emph{exact} computation and (5) we demonstrate the performance of proposed methods on a real molecular property prediction task in the small data regime.

\section{Priors on Finite Spaces of Graphs}
\label{sec:graph_gps}

Let $\c{V}$ be any of the sets $\c{U}_n, \c{UL}_n, \c{D}_n, \c{DL}_n$ of unweighted graphs on $n$ nodes, identified with $\cbr{0,1}^d$ for the appropriate $d$ (see~\Cref{sec:intro:input_spaces} for definitions).
Note that $\abs{\c{V}} = 2^d$.

Defining a reasonable Gaussian process prior on $\c{V}$ requires endowing the set $\c{V}$ with some sort of "geometric" structure, which --- as a bare minimum --- defines a notion of closeness.
Arguably the most general way of encoding a geometric structure of a finite set is by making it into a vertex set of a graph.
Moreover, principled and practical Gaussian processes on nodes of finite graphs were studied before \cite{borovitskiy2021, kondor2002}, meaning that we can build upon the previous work.

\begin{figure*}
\begin{minipage}{0.029\textwidth}
\begin{subfigure}{\linewidth}
\centering
\includegraphics[width=\linewidth]{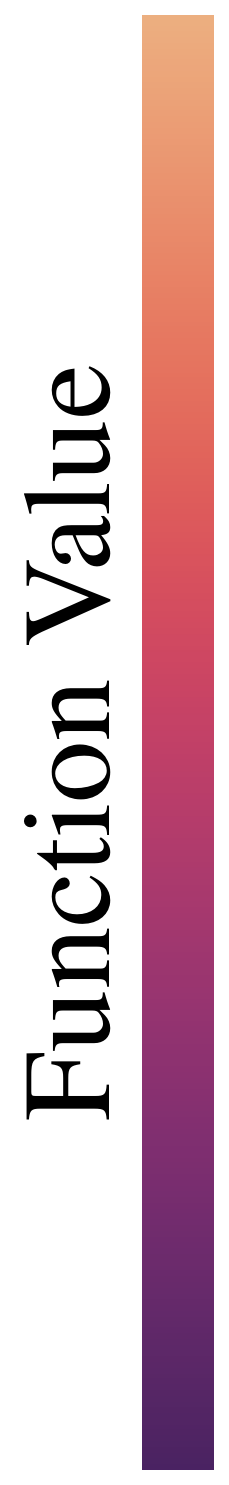}
\caption*{}
\label{fig:gpreg-colorbar-right}
\end{subfigure}
\end{minipage}
\begin{minipage}{0.93\textwidth}
\begin{subfigure}{0.24\textwidth}
\includegraphics[height=23ex]{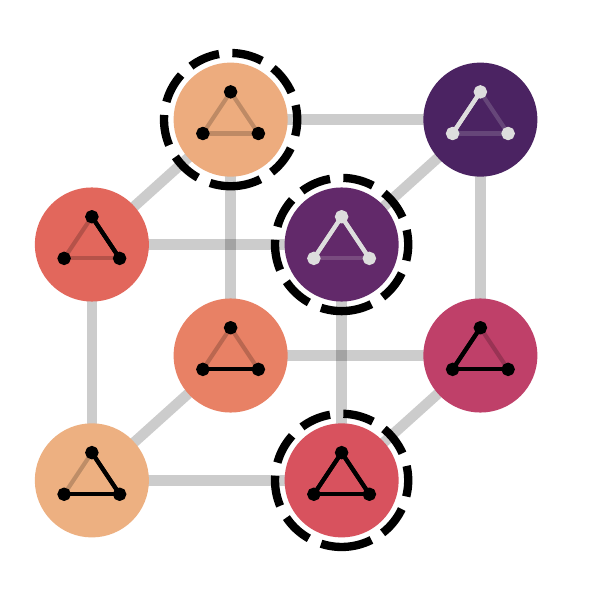}
\caption{Ground Truth}
\end{subfigure}
\hfill
\begin{subfigure}{0.24\textwidth}
\includegraphics[height=23ex]{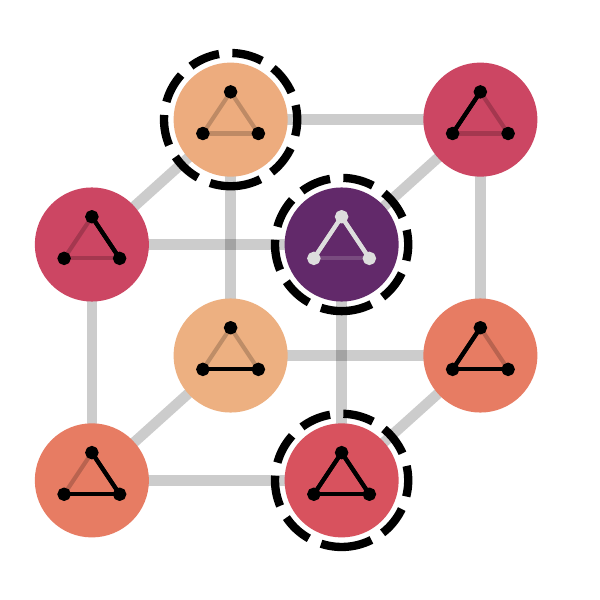}
\caption{Prediction}
\end{subfigure}
\hfill
\begin{subfigure}{0.24\textwidth}
\includegraphics[height=23ex]{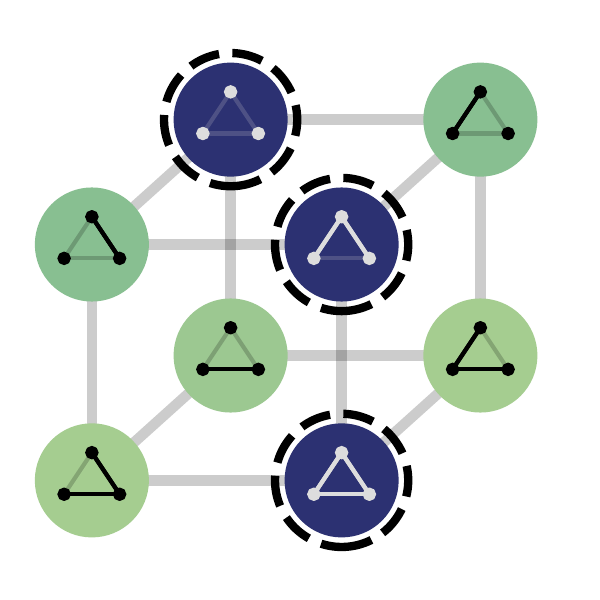}
\caption{Uncertainty}
\end{subfigure}
\hfill
\begin{subfigure}{0.24\textwidth}
\includegraphics[height=23ex]{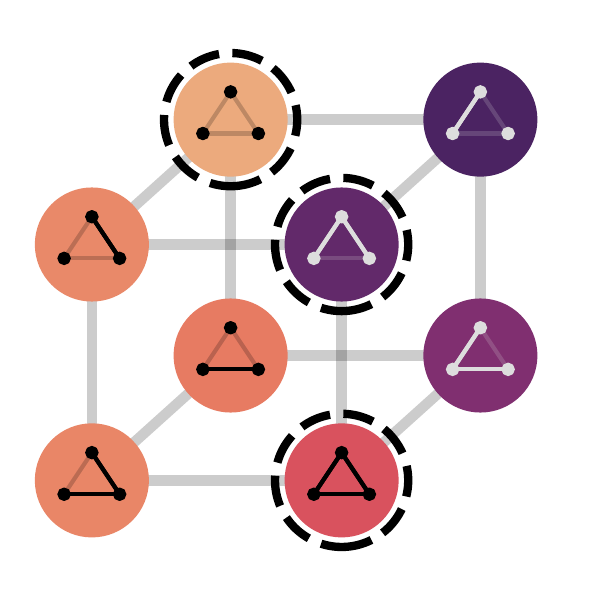}
\caption{Posterior sample}
\end{subfigure}
\end{minipage}
\begin{minipage}{0.029\textwidth}
\begin{subfigure}{\linewidth}
\centering
\includegraphics[width=\linewidth]{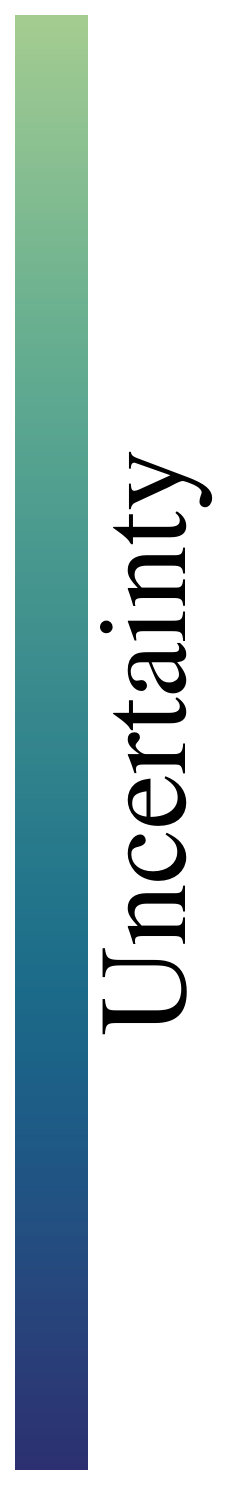}
\caption*{}
\label{fig:dragon-colorbar-right}
\end{subfigure}
\end{minipage}
\caption{A toy regression problem on the space $\c{U}_3$ of undirected graphs with $3$ nodes identified with the 3D cube graph. On these plots, color represents value of the
corresponding function and training locations are marked by dashed outline.}
\label{fig:graph_regression}
\end{figure*}

To avoid confusion, we will refer to the graph whose nodes are graphs as the \emph{metagraph} and denote it by $\c{G}$.
We propose the following natural structure for $\c{G} = (\c{V}, \c{E})$.
It is an unweighted undirected graph such that $(x, y) \in \c{E}$ if and only if the graph $y$ can be obtained from the graph $x$ by adding or deleting a single edge.
The notion of closeness defined by this structure corresponds to the Hamming distance, the number of differing bits in the representations of the graphs $x, y \in \c{V}$ as vectors in the set $\cbr{0,1}^{d}$.

This structure, however, also defines some notion of symmetries: the $\c{G}$'s own group of automorphisms $\f{Aut}(\c{G})$ that consists of all bijective maps $\phi: \c{V} \to \c{V}$ such that $(\phi(x), \phi(y)) \in \c{E} \iff (x, y) \in \c{E}$ for all  $x, y \in \c{V}$.

It is natural to ask that the distribution of a Gaussian process prior on $\c{V}$ is invariant with respect to symmetries from $\f{Aut}(\c{G})$, similar to how the standard Euclidean squared exponential and Mat\'ern priors are invariant with respect to translations and rotations, i.e., to the Euclidean isometries.

\begin{definition}
We call $f \sim \f{GP}(0, k)$ on $\c{V}$ \emph{isotropic} if and only if for all $\phi \in \f{Aut}(\c{G})$ and finite sets
$x_1, \ldots, x_n \in \c{V}$
\[
f(x_1, \ldots, x_n)
\stackrel{\text{d}}{=}
f(\phi(x_1), \ldots, \phi(x_n))
\]
where $\stackrel{\text{d}}{=}$ denotes equality in distribution.
Or, equivalently, 
\[
k(\phi(x), \phi(y)) = k(x, y)
\]
for all
$\phi \in \f{Aut}(\c{G})$
and for all pairs
$x, y \in \c{V}$.
We call the kernel of an isotropic process an \emph{isotropic kernel}.
\end{definition}

As we will show in~\Cref{sec:graph_gps:isotropic}, this class includes graph Mat\'ern Gaussian processes $\f{GP}(0, k_{\nu, \kappa, \sigma^2})$ on $\c{G}$ in the sense of \textcite{borovitskiy2021}, including, as special case for $\nu = \infty$, the Gaussian processes with diffusion (heat, squared exponential) kernels on hypercube graphs studied by \textcite{kondor2002}.
If we define $\m{\Delta}$ to be the graph Laplacian of $\c{G}$ and $(\lambda_j, f_j)$ to be its eigenpairs, where $\cbr{f_j}$ form an orthonormal basis in the set $L^2(\c{V}) = \R^{\abs{\c{V}}}$ of functions on the finite set $\c{V}$, these two kernels are given by
\[ \label{eqn:matern}
k_{\nu, \kappa, \sigma^2}(x, y)
=
\sum_{j=1}^{\abs{\c{V}}} \Phi_{\nu, \kappa, \sigma^2}(\lambda_j) f_j(x) f_j(y),
\\
\Phi_{\nu, \kappa, \sigma^2}(\lambda)
=
\begin{cases}
\frac{\sigma^2}{C_{\infty,\kappa}} e^{- \frac{\kappa^2}{2} \lambda},
&
\text{for } \nu = \infty, \\
\frac{\sigma^2}{C_{\nu,\kappa}} \del{\frac{2 \nu}{\kappa^2} + \lambda}^{-\nu}
&
\text{for } \nu < \infty.
\end{cases}
\]
In \Cref{fig:graph_regression} we illustrate a Gaussian process regression with a Matérn kernel ($\nu = 3.5$) on the set $\c{V} = \c{U}_3$ that will be made possible thanks to the developments of this section.

Clearly, any function $\Phi: \R \to [0, \infty)$ placed instead of $\Phi_{\nu, \kappa, \sigma^2}$ defines a valid covariance kernel.
An implicit degree of freedom here is the definition of $\m{\Delta}$.
If $\m{A}$ is the adjacency matrix of $\c{G}$ and $\m{D}$ is its diagonal degree matrix with $\m{D}_{i i} = \sum_{j=1}^{\abs{\c{V}}} \m{A}_{i j}$, there are three popular definitions of $\m{\Delta}$: (1) the usual Laplacian $\m{\Delta} = \m{D} - \m{A}$, (2) the random walk normalized Laplacian $\m{\Delta}_{rw} = \m{I} - \m{D}^{-1} \m{A}$ and (3) the symmetric normalized Laplacian $\m{\Delta}_{sym} = \m{I} - \m{D}^{-1/2} \m{A} \m{D}^{-1/2}$.
The class of all kernels given in form of~\Cref{eqn:matern} with any $\Phi: \R \to [0, \infty)$ we term \emph{$\Phi$-kernels}.\footnote{For our specific graph $\c{G}$ we have $\m{\Delta}_{rw} = \m{\Delta}_{sym} = \m{\Delta} / d$, hence $f_j$ are always the same and $\lambda_j$ only differ by a constant factor of~$d$, defining the same class of $\Phi$-kernels.}

Notice that although absolutely explicit, expressions like~\Cref{eqn:matern} are ostensibly intractable: they entail (1) solving the eigenproblem for the $2^d \x 2^d$-sized Laplacian matrix and (2) summing up the $2^d$ terms together.
To come with an efficient computational algorithm and to better understand the properties of these and similar kernels we proceed to study the class of isotropic processes on $\c{V}$.

\subsection{The Class of Isotropic Gaussian Processes}
\label{sec:graph_gps:isotropic}

To characterize the class of isotropic processes on $\c{V}$ we need to introduce a few additional notions.
First, we define the group $\Z_2^m$ to be the set $\cbr{0, 1}^m$ endowed with the bitwise XOR operation, which we denote by $\pd$.
Obviously, this turns the set $\c{V}$ itself into the group $\Z_2^d$.
Second, for an element $\sigma$ of the group of permutations $\Sym_d$ of a $d$-sized set define $\sigma(x) = (x_{\sigma(1)}, \ldots, x_{\sigma(d)})$ where $x$ is a graph regarded as an element of~$\cbr{0, 1}^d$.
With this, we have the following characterization of isotropic processes.

\begin{restatable}{theorem}{ThmIsotropic} \label{thm:isotropic}
A Gaussian process $f \sim \f{GP}(0, k)$ on $\c{V}$ is isotropic if and only if for all $x, y, z \in \c{V} = \Z_2^d$ and $\sigma \in \Sym_d$
\begin{align*}
(i):
&&
k(x \pd z, y \pd z) = k(x, y),
\\
(ii):
&&
k(\sigma(x), \sigma(y)) = k(x, y).
\end{align*}
\end{restatable}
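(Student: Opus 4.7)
The central observation is that for each of the four graph spaces $\c{V} \in \cbr{\c{U}_n, \c{UL}_n, \c{D}_n, \c{DL}_n}$, the metagraph $\c{G}$ is precisely the $d$-dimensional hypercube $Q_d$, with vertex set $\cbr{0,1}^d$ and edges between vectors at Hamming distance one: in each case, adding or removing a single edge of the underlying graph toggles exactly one bit of the flattened encoding. The theorem therefore reduces to identifying $\f{Aut}(\c{G}) = \f{Aut}(Q_d)$ with the set of maps $x \mapsto \sigma(x) \pd z$ for $\sigma \in \Sym_d$ and $z \in \Z_2^d$, and then rewriting $\f{Aut}(\c{G})$-invariance of the kernel accordingly.

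The forward direction is easy. Every XOR-translation $x \mapsto x \pd z$ preserves Hamming distance since $(x \pd z) \pd (y \pd z) = x \pd y$, and every coordinate permutation $x \mapsto \sigma(x)$ preserves it trivially. Both therefore lie in $\f{Aut}(\c{G})$, and specializing the isotropy condition $k(\phi(x), \phi(y)) = k(x, y)$ to these two subfamilies produces (i) and (ii).

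For the converse I would fix an arbitrary $\phi \in \f{Aut}(\c{G})$, set $z = \phi(0)$, and consider $\phi'(x) = \phi(x) \pd z$, which is again an automorphism and sends $0$ to $0$. Since the neighbors of $0$ in $Q_d$ are exactly the standard basis vectors $e_1, \ldots, e_d$, the automorphism $\phi'$ permutes them, determining a unique $\sigma \in \Sym_d$ with $\phi'(e_i) = e_{\sigma(i)}$. I then claim $\phi'(x) = \sigma(x)$ for every $x \in \cbr{0,1}^d$ and prove it by induction on the Hamming weight $\abs{x}$. The cases $\abs{x} \leq 1$ are immediate. For $\abs{x} = k \geq 2$, writing $x = e_{i_1} \pd \cdots \pd e_{i_k}$, pick the two distinct weight-$(k-1)$ neighbors $y = x \pd e_{i_1}$ and $y' = x \pd e_{i_k}$ of $x$. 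Since $\phi'$ preserves the distance to $0$, the vector $\phi'(x)$ has weight $k$ and is adjacent to both $\sigma(y)$ and $\sigma(y')$; writing $\phi'(x) = \sigma(y) \pd e_j = \sigma(y') \pd e_{j'}$ forces $e_j \pd e_{j'} = e_{\sigma(i_1)} \pd e_{\sigma(i_k)}$, and the requirement that $j$ lie outside the support $\sigma(\cbr{i_2, \ldots, i_k})$ of $\sigma(y)$ pins down $j = \sigma(i_1)$, giving $\phi'(x) = \sigma(x)$. Hence $\phi(x) = \sigma(x) \pd z$ for all $x$, and any $k$ satisfying (i) and (ii) inherits $k(\phi(x), \phi(y)) = k(\sigma(x) \pd z, \sigma(y) \pd z) = k(\sigma(x), \sigma(y)) = k(x, y)$, so $k$ is isotropic.

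The only step with genuine content is the induction identifying $\phi'$ with a coordinate permutation; everything else is bookkeeping. A small care point is verifying uniformly across all four graph spaces that ``add or delete a single edge'' coincides with flipping a single bit in the corresponding $\cbr{0,1}^d$ encoding, but this is immediate from the identifications given in~\Cref{sec:intro:input_spaces}.
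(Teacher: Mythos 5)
Your proof is correct, and at the top level it follows the same route as the paper: reduce the theorem to the statement that every automorphism of the hypercube metagraph $\c{G}$ factors uniquely as a coordinate permutation followed by an XOR-translation, i.e., $\f{Aut}(\c{G}) = \Z_2^d \rtimes \Sym_d$, after which both implications are bookkeeping exactly as you describe. The difference is that the paper simply cites this structural fact (Ovchinnikov, Theorem 3.31, the hyperoctahedral group) and invokes the unique factorization property of the semidirect product, whereas you prove it from scratch: normalizing $\phi$ so that it fixes $\v{0}$, reading off $\sigma$ from the induced permutation of the neighbors $\v{e}_1, \ldots, \v{e}_d$ of $\v{0}$, and then identifying $\phi'$ with $\sigma$ by induction on Hamming weight. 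That induction is sound --- the weight-preservation argument pins down $\phi'(x)$ as the unique weight-$k$ common neighbor of $\sigma(y)$ and $\sigma(y')$ consistent with the support constraint --- so your write-up is self-contained where the paper's is not. (A cosmetic point only: with the paper's convention $\sigma(x) = (x_{\sigma(1)}, \ldots, x_{\sigma(d)})$ one has $\sigma(\v{e}_i) = \v{e}_{\sigma^{-1}(i)}$, so your $\sigma$ is the inverse of the paper's; this is immaterial since $\Sym_d$ is a group.)
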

\begin{proof}
This is a corollary of describing $\f{Aut}(\c{G})$ as the semidirect product~$\Z_2^d \rtimes \Sym_d$ --- see the definition, relevant discussion and the detailed proof in~\Cref{appdx:graphs}.
\end{proof}

This characterization mirrors the description of isotropic Gaussian processes on $\R^d$ as having kernels invariant to all translations, similar to (i), and to all rotations, similar to (ii).\footnote{In fact, $\Z_2^d$ and $\Sym_d$ may be regarded as the groups of translations and rotations related to vector spaces over the \emph{field~of char\-ac\-ter\-is\-tic~$2$} in place of the field $\R$, as discussed in~\Cref{appdx:graphs}.}
Moreover, this characterization may be leveraged to obtain an explicit description of the isotropic kernels class.

\begin{restatable}{theorem}{ThmIsotrKernel} \label{thm:isotr_kernel}
$f \sim \f{GP}(0, k)$ is isotropic on $\c{V}$ if and only if
\[ \label{eqn:isotr_kernel}
k(x, y)
=
\sum_{j=0}^d
\alpha_j
\!\!\!\!
\sum_{T \subseteq \cbr{1, \ldots, d}, \abs{T} = j}
\!\!\!\!\!\!\!\!
w_T(x) w_T(y),
&&
\alpha_j \geq 0,
\]
where $w_T(x) = (-1)^{\sum_{t \in T} x_t}$ are the Walsh functions --- the analogs of complex exponentials in the Fourier analysis of Boolean functions \cite{odonnell2014}.
\end{restatable}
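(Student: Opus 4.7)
The plan is to use \Cref{thm:isotropic} to first reduce $k$ to a function of a single variable, then expand this function in the Walsh basis (Fourier basis on $\Z_2^d$), use permutation invariance to collapse the coefficients by $\abs{T}$, and finally verify the sign condition from positive semidefiniteness.

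First I would exploit translation invariance (i) by setting $z = y$ to get $k(x, y) = k(x \pd y, y \pd y) = k(x \pd y, 0)$, so defining $g(u) := k(u, 0)$ we have $k(x,y) = g(x \pd y)$. Applying permutation invariance (ii) with $y = 0$ gives $g(\sigma(u)) = k(\sigma(u), 0) = k(\sigma(u), \sigma(0)) = k(u, 0) = g(u)$, so $g$ is invariant under permutations of coordinates and thus depends only on the Hamming weight of its argument.

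Next I would expand $g$ in the Walsh basis. Since $\cbr{w_T : T \subseteq \cbr{1, \ldots, d}}$ is an orthogonal basis of the space of real-valued functions on $\Z_2^d$ (with $\inner{w_T}{w_{T'}} = 2^d \delta_{T T'}$), we can write $g(u) = \sum_T \beta_T w_T(u)$ for unique coefficients $\beta_T$. The crucial multiplicative identity $w_T(x \pd y) = w_T(x) w_T(y)$, which follows because $(-1)^{a \oplus b} = (-1)^{a + b}$ over $\Z$, then yields
\[
k(x, y) = g(x \pd y) = \sum_{T} \beta_T w_T(x) w_T(y).
\]
The permutation invariance of $g$, together with $w_{\sigma(T)}(\sigma(u)) = w_T(u)$ and uniqueness of Fourier coefficients, forces $\beta_{\sigma(T)} = \beta_T$ for all $\sigma \in \Sym_d$, so $\beta_T$ depends only on $\abs{T}$. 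Setting $\alpha_j := \beta_T$ for any $T$ with $\abs{T} = j$ produces the stated form.

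Finally I would handle the sign condition. Grouping the $w_T$'s into a matrix indexed by $\c{V}$, the full $2^d \x 2^d$ Gram matrix $\m{K} = \sbr{k(x,y)}_{x,y \in \c{V}}$ has the explicit spectral decomposition $\m{K} = \sum_T \beta_T \v{w}_T \v{w}_T^{\top}$ where the vectors $\v{w}_T = (w_T(x))_{x \in \c{V}}$ are mutually orthogonal with squared norm $2^d$. Hence the eigenvalues of $\m{K}$ are exactly $2^d \alpha_{\abs{T}}$, and $\m{K} \succeq 0$ (required since $k$ is a covariance kernel) is equivalent to $\alpha_j \geq 0$ for every $j$. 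Conversely, when all $\alpha_j \geq 0$, the expression is manifestly a nonnegative combination of rank-one PSD kernels $w_T(x) w_T(y)$, and isotropy follows either by construction or by checking conditions (i) and (ii) directly via the multiplicative identity on Walsh functions. The main subtlety is thus not an obstacle but a bookkeeping step: correctly identifying the Walsh expansion as a spectral decomposition so that PSD of $k$ translates into the inequalities $\alpha_j \geq 0$.
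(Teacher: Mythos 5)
Your proof is correct, and it reaches the Walsh decomposition by a genuinely different mechanism than the paper. The paper's argument is probabilistic (Karhunen--Lo\`eve style): it expands the random function itself as $f = \sum_T \varepsilon_T w_T$ with $\varepsilon_T = \innerprod{f}{w_T}$, uses condition (i) inside a double-sum covariance computation to show $\Cov(\varepsilon_T, \varepsilon_{T'}) = 0$ for $T \neq T'$, and then reads off $\alpha_T = \Var(\varepsilon_T)$ --- so the nonnegativity of the coefficients comes for free as variances. You instead argue at the level of the kernel alone, in the spirit of Bochner's theorem on $\Z_2^d$: condition (i) with $z = y$ reduces $k$ to a single-variable function $g(x \pd y)$, and the character identity $w_T(x \pd y) = w_T(x)\,w_T(y)$ converts the Walsh expansion of $g$ into the rank-one decomposition of $k$; nonnegativity then requires the separate (but easy) observation that the vectors $\v{w}_T$ are orthogonal eigenvectors of the Gram matrix, so positive semidefiniteness forces $\beta_T \geq 0$. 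Both proofs handle condition (ii) identically, via $w_T(\sigma(x)) = w_{\sigma(T)}(x)$ and uniqueness of Fourier coefficients. Your route is arguably more elementary and self-contained (no appeal to the Gaussianity of $f$, only to positive semidefiniteness of $k$, so it characterizes isotropic \emph{kernels} rather than isotropic Gaussian processes), while the paper's route packages the sign condition more cleanly and additionally exhibits the process as a sum of independent Gaussian coefficients, which it later reuses for the sampling approximations. One small presentational note: you should state explicitly, as you do implicitly via ``isotropy follows by construction,'' that the converse direction needs conditions (i) and (ii) verified from the formula --- the one-line checks $w_T(z)^2 = 1$ and $\abs{\sigma^{-1}(T)} = \abs{T}$ suffice.
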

\begin{proof}
From (i) it is possible to deduce that
\[
k(x, y) = \sum_{T \subseteq \cbr{1, \ldots, d}} \alpha_T w_T(x) w_T(y),
&&
\alpha_T \geq 0.
\]
Then, (ii) yields $\alpha_{T} = \alpha_{T'}$ for all sets $T$, $T'$ of the same size.
See the detailed proof in~\Cref{appdx:graphs}.
\end{proof}

\begin{figure*}
\centering
\includegraphics[width=0.99\textwidth]{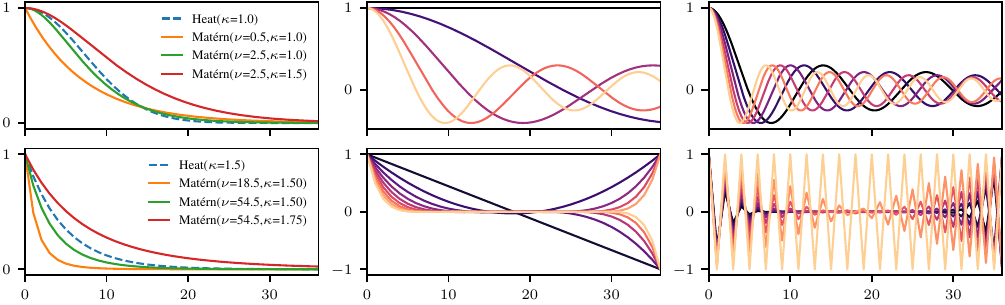}
\caption{Isotropic kernels (left) and their low (center) and high (right) frequency components, as functions of distance, in the classical Euclidean case, for $\R^2$ (top row), and in the graph space case, on $\c{DL}_6$ (bottom row).
The basis functions on $\c{DL}_6$ are $G_{36, j, m}$ as functions of $m$ for various $j$.
The basis functions on $\R^2$ arise from Hankel transform~\cite{bracewell2000} and coincide with $J_0(2 \pi q r)$ as functions of $r$ for various $q$, where $J$ denotes a Bessel function of the first kind.}
\label{fig:graph_kernels}
\end{figure*}

On the other hand, we can prove that the class of $\Phi$-kernels has exactly the same form.

\begin{restatable}{theorem}{ThmLapSpectrum} \label{thm:lap_spectrum}
The set $\cbr{w_T(\cdot)}_{T \subseteq \cbr{1, \ldots, d}}$ is an orthonormal basis of $L^2(\c{V})$ consisting of eigenfunctions of any of the Laplacians $\m{\Delta}, \m{\Delta}_{rw}, \m{\Delta}_{sym}$ on $\c{G}$ with eigenvalues
\[
\lambda_T^{\m{\Delta}} = 2 \abs{T}
&&
\lambda_T^{\m{\Delta}_{rw}} = \lambda_T^{\m{\Delta}_{sym}} = 2 \abs{T} / d
\]
\end{restatable}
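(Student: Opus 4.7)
The plan is to recognize that the metagraph $\c{G}$ is precisely the $d$-dimensional hypercube graph $Q_d$, identified with the Cayley graph of $\Z_2^d$ with the standard generators. Once this identification is made, the Walsh functions appear naturally as the characters of $\Z_2^d$, and both assertions of the theorem reduce to short direct computations.

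First I would verify that $\c{G} = Q_d$: by construction, two elements of $\cbr{0,1}^d$ are adjacent in $\c{G}$ iff they differ in exactly one coordinate, which is exactly the edge relation of $Q_d$. In particular $\c{G}$ is $d$-regular, so $\m{D} = d\m{I}$ and hence
\[
\m{\Delta}_{rw} = \m{\Delta}_{sym} = \tfrac{1}{d}\m{\Delta}.
\]
This reduces all three eigenvalue claims to a single claim about the unnormalized Laplacian $\m{\Delta} = d\m{I} - \m{A}$.

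Second I would compute $\m{\Delta} w_T$ pointwise. Since $w_T$ is the group character $x \mapsto (-1)^{\sum_{t \in T} x_t}$ on $\Z_2^d$, and the neighbors of $x$ are exactly the elements $x \pd e_i$ for $i = 1, \dots, d$, one has $w_T(x \pd e_i) = -w_T(x)$ when $i \in T$ and $w_T(x \pd e_i) = w_T(x)$ when $i \notin T$. Therefore
\[
(\m{\Delta} w_T)(x)
= \sum_{i=1}^d \bigl(w_T(x) - w_T(x \pd e_i)\bigr)
= 2 \abs{T}\, w_T(x),
\]
which gives $\lambda_T^{\m{\Delta}} = 2\abs{T}$ and, after dividing by $d$, yields $\lambda_T^{\m{\Delta}_{rw}} = \lambda_T^{\m{\Delta}_{sym}} = 2\abs{T}/d$.

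Finally, orthonormality of $\cbr{w_T}_{T \subseteq \cbr{1,\dots,d}}$ in $L^2(\c{V})$ equipped with the uniform probability inner product is the classical orthogonality of characters of the finite abelian group $\Z_2^d$; completeness follows because this family has cardinality $2^d = \dim L^2(\c{V})$. I do not foresee a genuine obstacle here: the proof is essentially routine Boolean Fourier analysis, and the only subtlety is the bookkeeping needed to fix the normalization convention on $L^2(\c{V})$ so that $\norm{w_T} = 1$.
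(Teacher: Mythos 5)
Your proposal is correct and follows essentially the same route as the paper: identify $\c{G}$ as the Cayley graph of $\Z_2^d$ (the hypercube), use the character property of the Walsh functions on the neighbors $x \pd \v{e}_i$ to diagonalize, and reduce the normalized Laplacians to $\m{\Delta}/d$ via $d$-regularity. The only cosmetic difference is that the paper first computes the adjacency eigenvalues $d - 2\abs{T}$ in a lemma and then passes to $\m{\Delta} = d\m{I} - \m{A}$, whereas you compute $\m{\Delta} w_T$ directly; the content is identical.
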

\begin{proof}
The key is to recognize $\c{G}$ as the \emph{Cayley graph} of the group $\Z_2^d$.
See the detailed proof in~\Cref{appdx:graphs}.
\end{proof}

\begin{corollary*}
    The class of isotropic kernels on $\c{V}$ and the class of $\Phi$-kernels on $\c{G}$ coincide.
\end{corollary*}

\subsection{Efficient Computation of Isotropic Kernels}

\Cref{thm:lap_spectrum} relieves us of the need to solve the eigenproblem for $\m{\Delta}$ numerically.
Although this saves us an immense number of $O(2^{3 d})$ computational operations, computing $k(x, y)$ still entails summing up $2^d$ terms --- an impossible problem for even a moderate $d$.
For general graphs, \textcite{borovitskiy2021} recommend approximating $k(x, y)$ by truncating the sum in~\Cref{eqn:matern}.
Below we show that a much more efficient solution exists.

Denote the inner sum in~\Cref{eqn:isotr_kernel} by
\[
G_{d, j}(x, y)
=
\sum_{T \subseteq \cbr{1, \ldots, d}, \abs{T} = j}  w_{T}(x) w_{T}(y).
\]
For $x \in \cbr{0, 1}^d$ define $\abs{x} = \sum_{i=1}^d x_d$.
Then we have the following characterization of $G_{d, j}(x, y)$.

\begin{restatable}{theorem}{EffComp} \label{thm:eff_comp}
$G_{d, j}(x, y) = G_{d, j, m}$ where $m = \abs{x \pd y}$ is the Hamming distance between graphs $x$ and $y$.
Moreover,
\[ \label{eq:g-d-j-m-recurrence}
G_{d, j, m}
=
G_{d-1, j, m-1}-G_{d-1, j-1, m-1}
\]
with $G_{1, j, m} = (-1)^m$, $G_{d, 0, m} = 1$ and $G_{d, j, 0} = \binom{d}{j}$.
\end{restatable}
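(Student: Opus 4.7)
The plan is to first collapse $G_{d,j}(x,y)$ to a quantity that depends on $x,y$ only through their Hamming distance, and then obtain the recurrence by peeling off a single coordinate. The crucial identity is that, since $x_t, y_t \in \cbr{0,1}$, one has $(-1)^{x_t + y_t} = (-1)^{x_t \oplus y_t}$, so the Walsh functions satisfy $w_T(x) w_T(y) = w_T(x \pd y)$. Substituting this into the defining sum I get
\[
G_{d,j}(x,y) = \sum_{T \subseteq \cbr{1,\ldots,d},\, \abs{T} = j} (-1)^{\abs{T \cap S}},
\]
where $S = \cbr{i : (x \pd y)_i = 1}$ and $\abs{S} = m$. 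Because the set of admissible $T$ and the constraint $\abs{T}=j$ are both invariant under permutations of coordinates, the sum depends on $S$ only through its cardinality $m$. This establishes the first claim, $G_{d,j}(x,y) = G_{d,j,m}$.

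For the recurrence I would fix $m \geq 1$ and, using the permutation symmetry just established, choose coordinates so that $d \in S$. Splitting the sum according to whether the $d$-th coordinate belongs to $T$ yields two pieces. In the case $d \notin T$, the set $T$ ranges over $j$-subsets of $\cbr{1,\ldots,d-1}$ and $T \cap S = T \cap (S \setminus \cbr{d})$, with $\abs{S \setminus \cbr{d}} = m-1$; the total contribution is exactly $G_{d-1,j,m-1}$. In the case $d \in T$, writing $T = T' \cup \cbr{d}$ with $\abs{T'} = j-1$ gives $(-1)^{\abs{T \cap S}} = -(-1)^{\abs{T' \cap (S \setminus \cbr{d})}}$, so the contribution is $-G_{d-1,j-1,m-1}$. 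Adding the two pieces delivers the stated recurrence.

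Finally, the base cases follow directly from the same formula. For $m=0$ we have $S = \emptyset$, every summand equals $1$, and there are $\binom{d}{j}$ summands, giving $G_{d,j,0} = \binom{d}{j}$. For $j=0$ only the empty $T$ contributes, giving $G_{d,0,m} = 1$. For $d=1$ with $j=1$, the only choice is $T = \cbr{1}$ and $\abs{T \cap S} = m \in \cbr{0,1}$, giving $G_{1,j,m} = (-1)^m$.

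I do not expect any serious obstacle: once the identity $w_T(x)w_T(y) = w_T(x \pd y)$ is in place, the rest is coordinate-wise bookkeeping, and the recurrence is essentially the classical three-term recursion for Krawtchouk polynomials. The only subtlety worth naming is the appeal to the permutation symmetry to assume $d \in S$, which is legitimate precisely because we have already shown that $G_{d,j}(x,y)$ is a function of $m$ alone.
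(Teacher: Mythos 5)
Your proposal is correct and follows essentially the same route as the paper's proof: both rest on the identity $w_T(x)w_T(y)=w_T(x\pd y)$, invoke permutation invariance to reduce the dependence to $m=\abs{x\pd y}$, and derive the recurrence by splitting the $j$-subsets $T$ according to membership of one distinguished coordinate of the support of $x\pd y$ (you peel off coordinate $d$; the paper peels off coordinate $1$ after sorting $z$ so its ones come first). The base-case verifications also match, so nothing further is needed.
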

\begin{proof}
See~\Cref{appdx:graphs}.
\end{proof}

This means that $G_{d, j, m}$ may be computed by a simple dynamical programming procedure.
Moreover, caching $G_{d, j, m}$ makes kernel evaluation for arbitrary $\Phi$ take only $O(d)$ computational operations.
This can be further reduced to $O(1)$ by truncating $G_{d, j, m}$.\footnote{This truncation will implicitly incorporate an orders of magnitude larger number of Walsh function terms than the naïve one.}. This is very important for optimizing Gaussian process hyperparameters like length scale of Mat\'ern kernels, when the kernels are evaluated with different functions $\Phi$ each iteration.
We illustrate the graph and the Euclidean isotropic kernels on~\Cref{fig:graph_kernels}.

Interestingly, by virtue of~\Cref{eqn:kraw} in~\Cref{appdx:graphs}, functions $G_{d, j, m}$ coincide with the Kravchuk polynomials defined, e.g., in \textcite{macwilliams1977}. 

As a byproduct of sorts, we established that for an isotropic Gaussian process $\f{GP}(0, k)$ we have
\[
k(x, y) = \Bbbk(\abs{x \pd y})
&&
\text{for some } \Bbbk: \cbr{1, .., d} \to \R
\]
showing that isotropic kernels do indeed respect the notion of closeness given by Hamming distance.

What is more, since for $x \in \cbr{0,1}^d$ we have $\abs{x} = \norm{x}^2$, we see that for any $g: [0, \infty) \to [0, \infty)$ such that $\tilde{k}(x, y) = g(\norm{x-y}^2)$ is positive semidefinite for $x, y \in \R^d$, its restriction $k(x, y) = g(|x \pd y|)$ will be positive semidefinite and isotropic for $x, y \in \c{V} = \cbr{0, 1}^d$.

Finally, since the heat kernel on the direct product of graphs is the product of their heat kernels, by representing $\c{G}$ as the iterated product of the two-vertex complete graphs, one can infer \cite{kondor2002} that
\[ \label{eqn:heat_explicit}
k_{\infty, \kappa, \sigma^2}(x, y)
=
\sigma^2
\tanh(\kappa^2/2)^{|x \pd y|}
\]
which coincides with the Euclidean squared exponential 
\[
\sigma^2 \exp\del[2]{\!-\frac{\norm{x-y}^2}{2 \kappa'}}
~
\text{with}
~~
\kappa' \!\!=\! \frac{1}{2 \log(\tanh(\kappa^2/2))}.
\]
  
\subsection{Sampling from Isotropic Gaussian Processes}

Consider an isotropic Gaussian process $f \sim \f{GP}(0, k)$.
Since we are able to evaluate $k$ efficiently, given any collection $x_1, \ldots, x_m \in \c{V}$ denoted by $\v{x}$, sampling $f(\v{x})$ may be performed by
\[
f(\v{x}) = \m{K}_{\v{x} \v{x}}^{1/2} \v{\varepsilon}
&&
\v{\varepsilon} \sim \f{N}(0, \m{I}).
\]
Computing $\m{K}_{\v{x} \v{x}}^{1/2}$, however, requires $O(m^3)$ computational operations and is thus inefficient for larger $m$.

\begin{figure*}
\begin{subfigure}{0.30\textwidth}
\centering
\includegraphics[height=25ex]{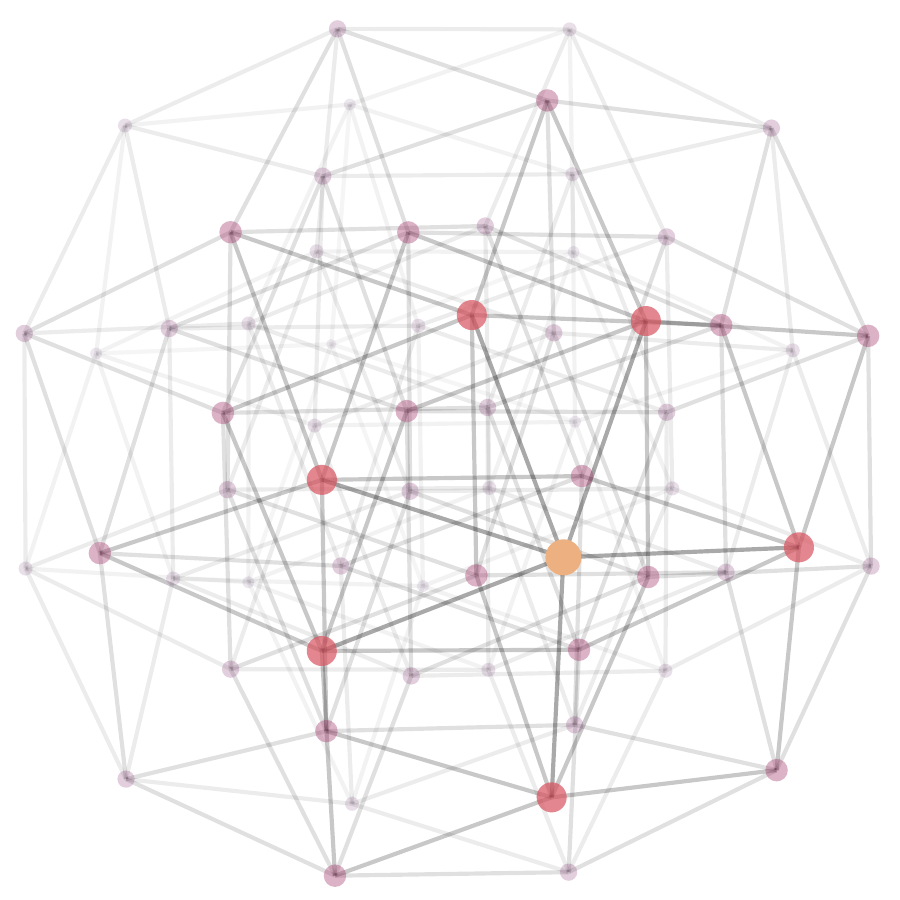}
\caption{Hypercube graph}
\end{subfigure}
\hfill
\begin{subfigure}{0.30\textwidth}
\centering
\includegraphics[height=25ex,width=25ex]{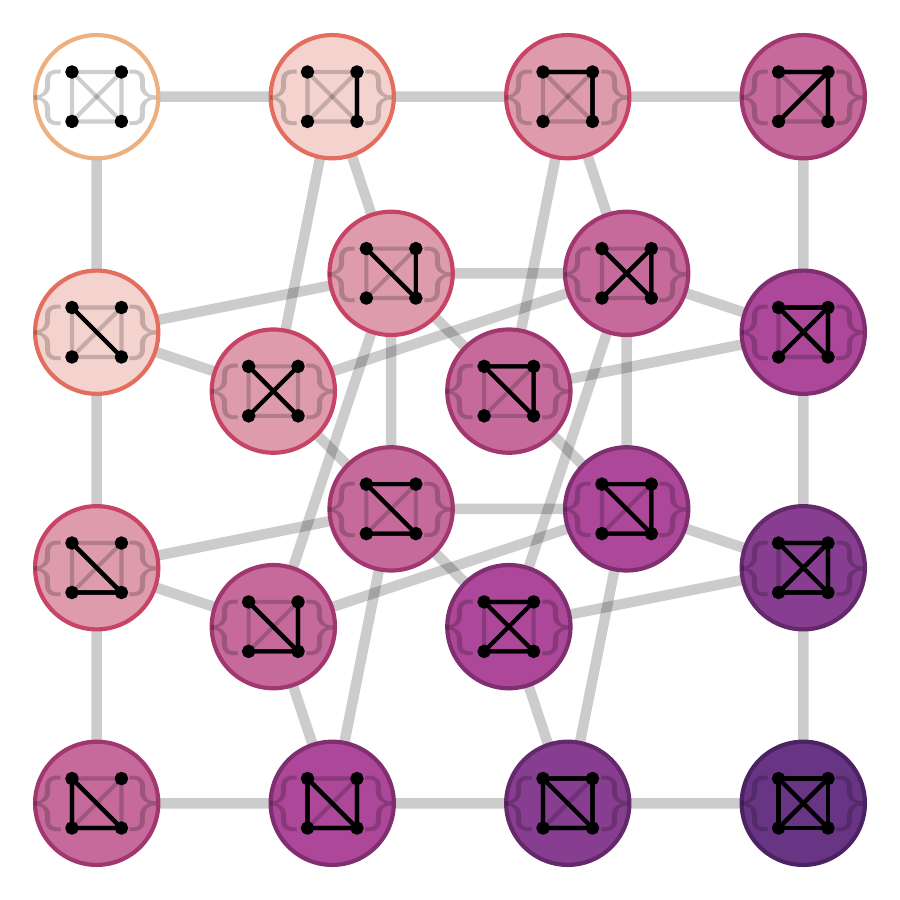}
\caption{Quotient graph under $\sim_H$} 
\end{subfigure}
\hfill
\begin{subfigure}{0.30\textwidth}
\centering
\includegraphics[height=25ex,width=25ex]{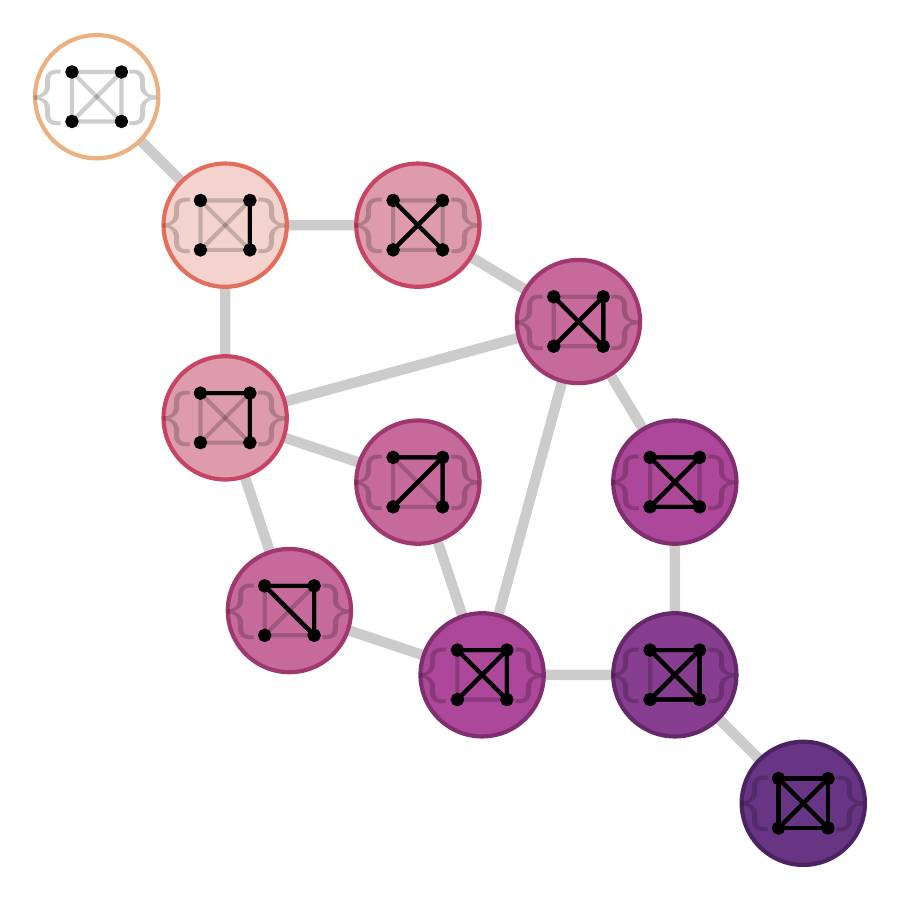}
\caption{Quotient graph under $\sim_{S_4}$, i.e. $\isom$}
\end{subfigure}
\caption{Mat\'ern kernel with $\nu = 8.5$ on the hypercube graph $\c{V} = \c{U}_4$ and on the quotient graphs corresponding to $\c{V}_{/H}$ and $\c{V}_{/ \Sym_4}$, where $H = \Sym_3 \x \Sym_1 \subseteq \Sym_4$, i.e. $x \sim_H y$ if and only if the first three vertices of $x$ may be permuted to make $x$ into $y$. Color of a node represents kernel's value between this node and the (equivalence class of) the empty graph.}
\label{fig:graphs}
\end{figure*}

Let us examine alternatives for the setting at hand.
Consider $k(x, y) = \sum_{j=0}^{d} \alpha_j G_{d, j}(x, y)$ with $\alpha_j \geq 0$ sorted in descending order.
As by~\textcite{borovitskiy2021}, one alternative is to use the approximation
\[ \label{eqn:sampling_ff}
f(x)
\!\approx\!
\sum_{j=0}^J
\sqrt{\alpha_j}
\!\!\!\!\!\!\!\!\!\!
\sum_{T \subseteq \cbr{1, \ldots, d}, \abs{T} = j}
\!\!\!\!\!\!\!\!\!\!\!\!\!\!
\varepsilon_T w_T(x),
&&
\varepsilon_T \stackrel{\text{i.i.d.}}{\sim} \f{N}(0,1).
\]
\looseness -1 This has the advantage of defining a sample at all $x \in \c{V}$ at once, not on a pre-specified small set $x_1, \ldots, x_m \in \c{V}$ as before, but with the downside of being approximate.
The number of terms in the inner sum grows roughly like $d^j$, thus~\Cref{eqn:sampling_ff} can be realistically evaluated only for small values of $J$.
If using larger values of $J$ is desirable, one approach, similar to the generalized random phase Fourier features of \textcite{azangulov2022} is
\[ \label{eqn:sampling_grpff}
f(x)
\approx
\sum_{j=0}^J
\frac{\sqrt{\alpha_j}}{\sqrt{L}}
\sum_{l=1}^L
\varepsilon_{j, l} G_{d, j}(x, u_l),
\\
\varepsilon_{j, l} \stackrel{\text{i.i.d.}}{\sim} \f{N}(0,1),
\qquad
u_l \stackrel{\text{i.i.d.}}{\sim} \f{U}(\c{V}).
\]
where $\f{U}(\c{V})$ denotes the uniform distribution over the finite set $\c{V}$.
This way one can safely take $J = d$, but one also needs to choose the value of $L$.

These sampling techniques might be useful to perform non-conjugate learning via doubly stochastic
variational inference as briefly reviewed in the relevant setting by \textcite{borovitskiy2021}, to evaluate various Bayesian optimization acquisition functions or for visualization purposes etc.

\section{Priors on Spaces of Equivalence Classes}
\label{sec:equiv_gps}

Here we define and study natural Gaussian process priors on sets of equivalence classes of graphs.
Recall that we start with some finite graph set $\c{V}$ and a subgroup $H \subseteq \Sym_n$ of the group $\Sym_n$ of permutations of vertices.
This subgroup defines an equivalence relation $\sim_H$ on $\c{V}$ and the set of equivalence classes of graphs $\c{V}_{/ \sim_H}$.
For example, if $H = \Sym_n$, then $\sim_H$ is the graph isomorphism relation.
For convenience, we will write $\c{V}_{/ H}$ as a shorthand for~$\c{V}_{/ \sim_H}$.
We refer the reader to~\Cref{sec:intro:input_spaces} for a more detailed discussion on equivalence classes.

Our strategy to build a prior on $\c{V}_{/ H}$ is to take a prior on $\c{V}$ that is provided to us by~\Cref{sec:graph_gps}, and \emph{project} it, making it constant on each of the equivalence classes, i.e. such that $f(x) = f(y)$ for all $x \sim_H y$.

Consider the space $L^2(\c{V})$ of functions on $\c{V}$ and the operator $\f{Pr}: L^2(\c{V}) \to L^2(\c{V})$ given by
\[
(\f{Pr} f)(x)
=
\frac{1}{\abs{H}}
\sum_{\sigma \in H}
f(\sigma(x))
\]
Denote by $W$ the subspace of $L^2(\c{V})$ consisting of functions that are constant on each of the equivalence classes $\overline{x}$, $x \in \c{V}$.
We prove in~\Cref{appdx:classes} that the operator $\f{Pr}$ is an orthogonal projector onto the $\abs[0]{\c{V}_{/ H}}$-dimensional space $W$.

Now we define the prior on $\c{V}_{/ H}$ corresponding to a prior on~$\c{V}$, which we call its \emph{$H$-invariant version}.

\begin{definition}
Consider a Gaussian process $f \sim \f{GP}(0, k)$ on~$\c{V}$.
We call the Gaussian process
\[ \label{eqn:hinv_def}
f_{/H}(x) = (\Pr f)(x)
\quad
\text{with kernel}
\\
k_{/H}(x, y) = \frac{1}{\abs{H}^2} \sum_{\sigma_1 \in H} \sum_{\sigma_2 \in H} k(\sigma_1(x), \sigma_2(y))
\]
the \emph{$H$-invariant} version of $f$ and the kernel $k_{/H}$ the \emph{$H$-invariant} version of $k$.
\end{definition}

The term is justified because, as it is easy to see, for all $\sigma, \sigma_1, \sigma_2 \in H$ and all $x, y \in \c{V}$ we have
\[
f_{/H}(\sigma(x)) &= f_{/H}(x),
\\
k_{/H}(\sigma_1(x), \sigma_2(y)) &= k_{/H}(x, y),
\]
meaning that $f_{/H}$ and $ k_{/H}$ are constant when restricted onto each (pair of) equivalence classes.

\looseness -1 Interestingly, if we regard $H$-invariant versions as functions of equivalence classes (i.e., on the set~$\c{V}_{/ H}$) rather than functions of graphs (i.e., on the set~$\c{V}$), they may also be seen as arising from certain metagraphs, more concretely from  appropriately weighted \emph{quotient graphs} \cite{cozzo2018}.
Specifically, let us define the graph $\c{G}_{/H} = (\c{V}_{/H}, \c{E}_{/H})$ such that the weight of an edge $(\overline{x}, \overline{y})$ is exactly the cardinality of the set $\cbr{(x', y') \in \c{E} : x' \in \overline{x}, y' \in \overline{y}}$.
Then, we have following.

\begin{restatable}{theorem}{ThmQuotient} \label{thm:quotient}
Consider the class of $\Phi$-kernels induced by the symmetric normalized Laplacian,
and take a Gaussian process $f \sim \f{GP}(0, k)$ on $\c{V}$ with a $\Phi$-kernel given by a function~$\Phi$.
Its $H$-invariant version $f_{/H} \sim \f{GP}(0, k_{/H})$ has kernel $k_{/H}(x, y) = \psi(\overline{x}) \psi(\overline{y}) k_{\Phi}(\overline{x}, \overline{y})$ where $k_{\Phi}$ is the $\Phi$-kernel on the quotient graph $\c{G}_{/H}$ with the same function~$\Phi$.
Moreover, $\psi(\overline{x}) = \abs{\overline{x}}^{-1/2}$.
\end{restatable}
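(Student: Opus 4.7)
The approach is to exploit the spectral decomposition of the $\Phi$-kernel $k$ together with the observation that $H \subseteq \Sym_n$ acts on $\c{V}$ by automorphisms of the metagraph $\c{G}$: every $\sigma \in H$ permutes node labels, and applying $\sigma$ to a graph $x$ and to a single-edge modification $y$ of $x$ yields $\sigma(x)$ and a single-edge modification thereof, so the action preserves the edge set of $\c{G}$. The associated unitary action on $L^2(\c{V})$ therefore commutes with $\m{\Delta}_{sym}^{\c{G}}$, which means that an orthonormal eigenbasis $\{f_j\}$ of $\m{\Delta}_{sym}^{\c{G}}$ can be chosen so that a subset $f_1, \dots, f_r$ spans $W$ and the rest spans $W^{\perp}$, with $\f{Pr} f_j$ equal to $f_j$ or $0$ accordingly. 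Plugging the spectral expansion $k(x,y) = \sum_j \Phi(\lambda_j) f_j(x) f_j(y)$ into the defining formula~\eqref{eqn:hinv_def} and interchanging summation collapses the result to $k_{/H}(x, y) = \sum_{j=1}^r \Phi(\lambda_j) f_j(x) f_j(y)$.

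The central step is to identify $f_1, \dots, f_r$ with the eigenfunctions of $\m{\Delta}_{sym}^{\c{G}_{/H}}$ on the weighted quotient graph. Each $f_j$ factors as $f_j(x) = g_j(\overline{x})$ through the quotient map, and the orthonormality condition $\sum_x f_j(x) f_k(x) = \delta_{jk}$ rewrites as $\sum_{\overline{x}} \abs{\overline{x}}\, g_j(\overline{x}) g_k(\overline{x}) = \delta_{jk}$. To relate $g_j$ to a Laplacian on $\c{G}_{/H}$, I would use that $\c{G}$ is $d$-regular and that by $H$-invariance the number of $\c{G}$-neighbors of any fixed $x \in \overline{x}$ lying in the class $\overline{y}$ equals $w(\overline{x}, \overline{y}) / \abs{\overline{x}}$, independently of the representative. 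A direct computation then shows $(\m{\Delta}_{sym}^{\c{G}} f_j)(x) = (\m{\Delta}_{rw}^{\c{G}_{/H}} g_j)(\overline{x})$ with degrees $\m{D}_{\overline{x}\overline{x}} = d\abs{\overline{x}}$. Hence $g_j$ is an eigenfunction of $\m{\Delta}_{rw}^{\c{G}_{/H}}$ with eigenvalue $\lambda_j$; invoking the standard similarity $\m{\Delta}_{sym} = \m{D}^{1/2}\m{\Delta}_{rw} \m{D}^{-1/2}$ on the quotient yields that $\tilde g_j(\overline{x}) = \sqrt{\abs{\overline{x}}}\,g_j(\overline{x})$ is an eigenfunction of $\m{\Delta}_{sym}^{\c{G}_{/H}}$ with the same eigenvalue $\lambda_j$, and the weighted orthonormality of the $g_j$'s is exactly the standard $L^2(\c{V}_{/H})$ orthonormality of the $\tilde g_j$'s. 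Thus $\{\tilde g_j\}_{j=1}^{r}$ is a complete orthonormal eigenbasis of $\m{\Delta}_{sym}^{\c{G}_{/H}}$.

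Combining the two steps, $k_\Phi$ on the quotient expands as $k_\Phi(\overline{x},\overline{y}) = \sum_j \Phi(\lambda_j) \tilde g_j(\overline{x}) \tilde g_j(\overline{y})$, while the formula for $k_{/H}$ from the first paragraph reads $\sum_j \Phi(\lambda_j) g_j(\overline{x}) g_j(\overline{y})$. Since $\tilde g_j$ and $g_j$ differ by the factor $\sqrt{\abs{\overline{x}}}$, the two expansions are related by the weight $\psi(\overline{x}) \psi(\overline{y}) = \sqrt{\abs{\overline{x}}\abs{\overline{y}}}$, yielding the claimed factorization. The main obstacle I anticipate is the careful bookkeeping of inner products and normalizations in the eigenfunction identification: the argument truly relies on using $\m{\Delta}_{sym}$, since with the unnormalized $\m{\Delta}$ the $H$-invariant eigenfunctions on $\c{G}$ would not descend to eigenfunctions of $\c{G}_{/H}$ at all (the putative eigenvalues would scale by $\abs{\overline{x}}$), whereas the $d$-regularity of $\c{G}$ is precisely what allows the $H$-invariant spectrum to pull back cleanly to $\m{\Delta}_{rw}^{\c{G}_{/H}}$ and then, by the similarity, to $\m{\Delta}_{sym}^{\c{G}_{/H}}$ without distortion of the eigenvalues.
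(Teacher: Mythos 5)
The proposal is correct and takes essentially the same route as the paper's proof: choose an eigenbasis of $\m{\Delta}_{sym}$ on $\c{G}$ adapted to the splitting $W \oplus W^{\perp}$ (the paper gets this from the equitability of the orbit partition, you from the equivalent fact that $H$ acts by metagraph automorphisms and hence commutes with the Laplacian), collapse the spectral sum under $\f{Pr}$, and identify the surviving eigenfunctions, rescaled by $\sqrt{\abs{\overline{x}}}$, with an orthonormal eigenbasis of the quotient's symmetric normalized Laplacian at the same eigenvalues --- your detour through $\m{\Delta}_{rw}^{\c{G}_{/H}}$ plus the similarity $\m{\Delta}_{sym} = \m{D}^{1/2}\m{\Delta}_{rw}\m{D}^{-1/2}$ replaces the paper's direct computation with the incidence matrix $\m{S}$ and $\m{N}$, but proves the same lemma. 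One caveat you share with the paper's own last line: since the quotient eigenfunctions are $\tilde g_j = \sqrt{\abs{\overline{x}}}\, g_j$ with $g_j$ the descended $f_j$, what your (correct) displayed relations literally yield is $k_{\Phi}(\overline{x},\overline{y}) = \psi(\overline{x})\psi(\overline{y})\, k_{/H}(x,y)$, i.e.\ the $\psi$ factors land on the other side of the claimed identity (equivalently, $\psi(\overline{x}) = \abs{\overline{x}}^{-1/2}$), so the final sentence "yielding the claimed factorization" deserves a sign-of-the-exponent check rather than being waved through.
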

\begin{proof}
See~\Cref{appdx:classes}.
\end{proof}

When reinterpreting a function from $W \subseteq L^2(\c{V})$ as a function in $L^2(\c{V}_{/ H})$, the norm is not preserved.
This explains why the factor $\psi(\cdot)$ appears in the theorem above.

We illustrate Mat\'ern kernels on the graph $\c{G}$ for $\c{V} = \c{U}_4$, the quotient graph corresponding to $H = \Sym_3 \x \Sym_1 \subseteq \Sym_4$ and the quotient graph corresponding to the graph isomorphism relation $\isom$ in~\Cref{fig:graphs}.

After we introduced a way to build priors on $\c{V}_{/ H}$, we turn to the associated computational routines.

\subsection{Kernel Computation and Sampling}

We start by proving a hardness result suggesting it is impossible to compute $k_{/ H}$ exactly in an efficient way.

\begin{restatable}{theorem}{ThmHardness} \label{thm:hardness}
Assume $k$ is a $\Phi$-kernel on $\c{G}$ for a strictly positive $\Phi$.
Exactly computing $k_{/ H}$ at three pairs of inputs $(x, y), (x, x)$ and $(y, y)$ is at least as hard as checking whether $x \sim_H y$.
In particular, if $H = \Sym_n$, it is at least as hard as resolving the graph isomorphism problem.\footnote{Whether or not this may be done in polynomial time is currently considered an open problem.}
\end{restatable}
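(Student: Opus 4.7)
My plan is to reduce deciding $x \sim_H y$ to the three kernel evaluations via a Cauchy--Schwarz-style identity in the RKHS of $k$.

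First, I would give $k_{/H}$ an explicit feature representation in the reproducing kernel Hilbert space $\c{H}_k$ of $k$. Setting $\tilde{\phi}(x) = \frac{1}{\abs{H}} \sum_{\sigma \in H} k(\sigma(x), \cdot) \in \c{H}_k$, the reproducing property combined with the definition in~\Cref{eqn:hinv_def} yields $k_{/H}(x, y) = \langle \tilde{\phi}(x), \tilde{\phi}(y) \rangle_{\c{H}_k}$. Regrouping the sum by the $H$-orbit $O_x = \{\sigma(x) : \sigma \in H\}$ and absorbing stabilizer multiplicities gives the cleaner form $\tilde{\phi}(x) = \frac{1}{\abs{O_x}} \sum_{z \in O_x} k(z, \cdot)$, which depends on $x$ only through its orbit.

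Next, I would use strict positivity of $\Phi$ to deduce that the full Gram matrix $[k(z, z')]_{z, z' \in \c{V}}$ is strictly positive definite: by~\Cref{thm:lap_spectrum} it factorises as $F^\top \operatorname{diag}(\Phi(\lambda_T)) F$ with $F$ orthonormal and the diagonal strictly positive. Hence the family $\{k(z, \cdot)\}_{z \in \c{V}}$ is linearly independent in $\c{H}_k$. This is the only place the hypothesis $\Phi > 0$ enters, and it is essentially immediate.

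The core of the argument is then the polarisation identity
\[
k_{/H}(x, x) + k_{/H}(y, y) - 2\, k_{/H}(x, y) = \norm{\tilde{\phi}(x) - \tilde{\phi}(y)}_{\c{H}_k}^2.
\]
If $x \sim_H y$, the orbits agree, $\tilde{\phi}(x) = \tilde{\phi}(y)$, and the right-hand side vanishes. If $x \not\sim_H y$, the orbits $O_x$ and $O_y$ are disjoint, and by the linear independence established above the orbit-averaged features $\tilde{\phi}(x)$ and $\tilde{\phi}(y)$ cannot coincide, so the right-hand side is strictly positive. Thus an oracle that outputs $k_{/H}$ at the pairs $(x, y), (x, x), (y, y)$ decides $x \sim_H y$ in constant additional arithmetic, yielding the reduction; the case $H = \Sym_n$ specialises to graph isomorphism. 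I do not foresee a real obstacle: the whole proof rests on the orbit-average feature map and the linear-independence observation, both of which are essentially free given the preceding theory.
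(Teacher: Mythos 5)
Your argument is correct, and it reaches the same decision criterion as the paper --- $x \sim_H y$ if and only if $2k_{/H}(x,y) = k_{/H}(x,x) + k_{/H}(y,y)$, via the polarisation identity $\norm{\tilde\phi(x)-\tilde\phi(y)}^2 = k_{/H}(x,x)+k_{/H}(y,y)-2k_{/H}(x,y)$ --- but you establish the crucial injectivity of the feature map by a genuinely different route. The paper (see \Cref{thm:injective} and the proof of \Cref{thm:hardness} in \Cref{appdx:classes}) builds its feature map $\v{\phi}: \c{V}_{/H} \to \R^h$ out of the eigenfunctions of the symmetric normalized Laplacian of the \emph{quotient graph} $\c{G}_{/H}$, which requires the full machinery of equitable partitions and \Cref{thm:quotient}; injectivity is then deduced from the fact that $\cbr{\Phi(\lambda_j)^{1/2}\psi(\cdot)f_j(\cdot)}_{j=1}^h$ spans $L^2(\c{V}_{/H})$, so two distinct classes with equal features would make the indicator of one class unrepresentable. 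You instead work with the orbit-averaged canonical features $\tilde\phi(x) = \abs{O_x}^{-1}\sum_{z\in O_x}k(z,\cdot)$ in the RKHS of $k$ on $\c{V}$ itself, and get injectivity on orbits from linear independence of $\cbr{k(z,\cdot)}_{z\in\c{V}}$, which follows from strict positive definiteness of the Gram matrix $W^\top\operatorname{diag}(\Phi(\lambda_T))W$ granted by \Cref{thm:lap_spectrum} and $\Phi>0$ (your regrouping step, absorbing the stabilizer multiplicities $\alpha(x,z)=\abs{H}/\abs{O_x}$, matches the paper's auxiliary lemma on $\alpha$). Your version is more elementary and more general --- it needs nothing about quotient graphs and would apply verbatim to any strictly positive definite kernel on a finite set acted on by a finite group --- whereas the paper's version buys the structural insight that $k_{/H}$ is itself (up to the $\psi$ factors) a $\Phi$-kernel on the quotient metagraph, which it needs anyway for \Cref{thm:quotient}. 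The only cosmetic inaccuracy is calling $W$ orthonormal rather than orthogonal up to the scaling $WW^\top = 2^d\m{I}$; this does not affect strict positive definiteness.
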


\begin{proof}
We prove that $x \sim_H y$ is equivalent to the condition
\[
k(x, y) = \frac{k(x, x) + k(y, y)}{2}.
\]
See~\Cref{appdx:classes} for details.
\end{proof}

This effectively forbids exact evaluation of $k_{/ H}$ for larger values of $n$.
For small $n$, e.g., $n \leq 10$, this can be done simply by definition.
For moderately larger values of $n$, one may consider using the Monte Carlo approximation
\[ \label{eqn:montecarlo}
k_{/H}(x, y) \approx \frac{1}{\abs{S}^2} \sum_{\sigma_1 \in S} \sum_{\sigma_2 \in S} k(\sigma_1(x), \sigma_2(y)),
\\
S \subseteq H,
\qquad
S \ni \sigma \stackrel{\text{i.i.d.}}{\sim} \f{U}(H),
\]
where $\f{U}(H)$ denotes uniform distribution over the set~$H$.
\begin{remark}
The right-hand side of~\Cref{eqn:montecarlo} is necessarily positive semidefinite.
One can use the isotropy of $k$ to introduce a different approximation, $k_{/H}(x, y) = \frac{1}{\abs{S}} \sum_{\sigma \in S} k(\sigma(x), y)$ and hope for better convergence as
\[
k_{/ H}(x, y)
&=
\frac{1}{\abs{H}^2}
\sum_{\sigma_1, \sigma_2 \in H}
k(\sigma_2^{-1}\sigma_1(x), y)
\\
&=
\frac{1}{\abs{H}}
\sum_{\sigma \in H}
k(\sigma(x), y).
\]
However, this approximation can easily fail to be positive semidefinite (even symmetric!), hence the approximation in~\Cref{eqn:montecarlo} will usually be preferable in practice.
\end{remark}

\begin{figure*}[t]
\centering
\includegraphics[width=0.95\textwidth]{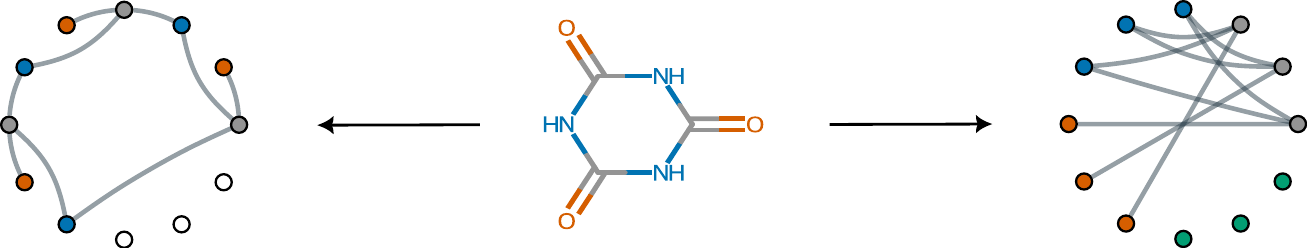}
\caption{Transforming molecules into graphs. Given a molecule (middle), we construct a graph by assigning certain atoms to arbitrary nodes (left), and by assigning atoms to certain pre-specified groups of nodes (right). Nodes (and node groups) carry the same color as the corresponding atoms.}
\label{fig:experiments}
\end{figure*}

Finally, we discuss sampling $f_{/H} \sim \f{GP}(0, k_{/ H})$.
Mirroring the approximation in~\Cref{eqn:montecarlo}, to draw a sample $\hat{f}_{/H}$ we suggest drawing a sample $\hat{f} \sim \f{GP}(0, k)$ and putting
\[
\hat{f}_{/H}(x)
=
\sum_{\sigma \in S}
\frac{1}{\abs{S}}
\hat{f}(\sigma(x)).
\]

It is trivial to check that if $\hat{f}$ is an exact sample from $\f{GP}(0, k_{/ H})$, then the covariance of the right-hand side will exactly coincide with the right-hand side of~\Cref{eqn:montecarlo}.

\vspace{-3pt}
\section{Experimental Setup}

Inspired by applications in chemistry, we evaluate our proposed models on a real molecular property prediction task.

To match typical application settings for Gaussian processes, we consider a small dataset.
Specifically, we utilize the FreeSolv \cite{mobley2014} dataset provided as part of the MoleculeNet benchmark \cite{wu2018}.
It consists of 642 molecules with experimentally measured hydration free energy values.
After removing invalid molecules, we utilize a 80/20 train/test split to obtain 510 and 128 examples for training and testing, respectively.

We also consider a subset of FreeSolv (FreeSolv-S), allowing upto four atom types (carbon, nitrogen, oxygen and chlorine), with a maximum of 3 atoms per atom type.
This results in 52 training and 13 text examples, and allows us to test exact projected Gaussian processes and demonstrate reasonable performance with even more limited data.

To construct graphs from molecules, we assign atoms to nodes, and corresponding chemical bonds to edges.
We adopt two strategies - a) assigning atoms to arbitrary nodes in the graph (\textsc{Graph-B} in \Cref{tab:molecule_experiment}, left on~\Cref{fig:experiments}), and b) assigning certain types of atoms to certain pre-specified groups of vertices thus aligning different data samples better (\textsc{Graph-A} in \Cref{tab:molecule_experiment}, A stands for aligned, right on~\Cref{fig:experiments}).
Both strategies are illustrated in~\Cref{fig:experiments}.

\begin{table}[bh!]
\caption{Molecule property prediction performance. RMSE is reported on the original data scale, with standard deviation of 3.89 on FreeSolv and 3.83 on FreeSolv-S.}
\label{tab:molecule_experiment}
\begin{center}
\resizebox{1.05\linewidth}{!}{\begin{tabular}{lcccc}
\toprule
 \multirow{2}{*}{\textbf{Method}} & \multicolumn{2}{c}{\textbf{FreeSolv}}&\multicolumn{2}{c}{\textbf{FreeSolv-S}}\\
 \cmidrule{2-3}\cmidrule{4-5}
 & Log Lik. & RMSE & Log Lik. & RMSE\\
 \midrule
 \textsc{Naive} & --- & $0.32$ & --- & $0.99 \pm 0.25$ \\
 \textsc{Linear} & $-154.37$ & $0.27$ & $-19.44 \pm 5.19$ & $1.06 \pm 0.26$ \\
 \midrule
\textsc{Graph-B} \\
 - Heat & $-151.28$ & $0.26$ & $-19.42 \pm 5.41 $ & $1.05 \pm 0.26$ \\
 - Mat\'ern & $-151.19$ & $0.26$ & $-19.65 \pm 5.11$ & $1.05 \pm 0.25$ \\
 \textsc{Graph-A} \\
 - Heat & $-77.55$ & $0.17$ & $-10.65 \pm 4.59$ & $0.67 \pm 0.31$ \\
 - Mat\'ern & $-77.54$ & $0.17$ & $-10.69 \pm 4.45$ & $0.67 \pm 0.30$ \\
 \textsc{Projected} \\
 - Heat & --- & --- & $-3.88 \pm 3.40$ & $0.50 \pm 0.20$ \\
 - Mat\'ern & --- & --- & $-4.26 \pm 3.37$ & $0.49 \pm 0.20$ \\
\midrule
\end{tabular}}
\end{center}
\end{table}

\textbf{Varying number of nodes.}
To handle graphs with varying number of nodes, we use Gaussian processes on the metagraph corresponding to the maximal number of nodes throughout the dataset.
All graphs are than "padded", if necessary, by additional disconnected nodes.
These can be seen at the bottom right of the graphs presented on~\Cref{fig:experiments}.

We compare our proposed methods to the \textsc{Naive} baseline that outputs the mean over the train set and a \textsc{Linear} kernel based Gaussian process. On the FreeSolv-S dataset, we also evaluate the projected kernels (\textsc{Projected}), where the group of $H$ is such that $x \sim_H y$ if and only if $x$ can be made into $y$ by permuting only atoms of the same type (carbons with carbon, oxygens with oxygen etc.). Results are displayed in~\Cref{tab:molecule_experiment}, with more details in~\Cref{appdx:experiments}, and discussed, along with the theoretical contributions, in~\Cref{sec:discussion_conclusion}. For the Freesolv-S dataset, we generate 10 random splits and report the mean and standard deviation of the metrics across these splits.

\vspace*{-\baselineskip}

\section{Discussion and Conclusion}
\label{sec:discussion_conclusion}
The ease of use of the graph heat kernel, granted by a closed form formula in~\Cref{eqn:heat_explicit}, and its fair performance observed in experiments make it a good baseline kernel to consider in the absence of other structure.
The theory of this paper places it onto a firmer foundation, showing how it arises from endowing graph sets with a natural geometry.

Graph Matérn kernels and the heat kernel may slightly outperform each other depending on the setting and the metric.
It is therefore interesting to characterize settings where and to what extent one kernel is better than the other.
Explicit spectral representations we derived both for finite-$\nu$ Matérn and heat kernels may be key to this, as they often appear in providing regression convergence rates \cite{kanagawa2018}.
These spectral representations may also be relevant for inferring regret bounds in Bayesian optimization \cite{srinivas2010} or related active learning techniques.

Our results suggest that projected graph Gaussian processes may heavily outperform simple graph Gaussian processes in problems possessing inherent invariances, where performance of the simple models is hindered by varied graph alignments --- though appropriate data preprocessing can already make a difference.
Since exact computation of projected kernels is precluded by \Cref{thm:hardness}, further research into building efficient approximations thereof --- which are not forbidden by the result --- is needed.

Finally, we believe that the geometric framework treating graph spaces as vertex sets of appropriate metagraphs is quite general and a promising direction for further development.
For example, we anticipate it to easily extend to certain settings with discretely labeled edges, where the metagraph can be chosen in such a way that isotropic processes are described in terms of other simple bases, similar to how they are described by the Walsh system in this paper.

To conclude, we hope that the proposed framework and our current developments set up the scene and guide further research interest for Gaussian process based modeling on spaces of graphs and their corresponding prospective applications like Bayesian optimization, active learning, etc.

\section*{Acknowledgments}
\looseness -1 The authors are grateful to Prof. Risi Kondor (The University of Chicago) and Dr. Konstantin Golubev (Google) for fruitful discussions.
This work was supported by the European Research Council (ERC) under the European Union’s Horizon 2020 research and innovation programme grant aggreement No 815943, the NCCR Catalysis (grant number 180544), a National Centres of Competence in Research funded by the Swiss National Science Foundation, and an ETH Z\"urich Postdoctoral Fellowship to VB. VRS acknowledges funding from the European Union’s Horizon 2020 research and innovation programme 826121.

\printbibliography

\newpage
\onecolumn
\appendix

%! TeX root = ./paper.tex

\section{Gaussian Processes on Finite Spaces of Graphs}
\label{appdx:graphs}

As in~\Cref{sec:graph_gps}, define $\c{V}$ to be one of the sets $\c{U}_n, \c{UL}_n, \c{D}_n$, or $\c{DL}_n$ of unweighted graphs on $n$ nodes, identified with $\cbr{0,1}^d$ for the appropriate $d$.
The metagraph $\c{G}$ from~\Cref{sec:graph_gps} may be recognized to be the \emph{hypercube graph} \cite{kondor2002}, also referred to as the \emph{$d$-cube graph} or just a \emph{cube graph} \cite{ovchinnikov2011}.
We start by describing the group $\f{Aut}(\c{G})$ of its automorphisms.
To do this, we need to introduce several notions from the group theory.

First, recall that a subgroup $N$ of a group $G$ is called \emph{normal} if and only if $g x g^{-1} \in N$ for all $g \in G$ and $x \in N$.
Now we formally introduce the semidirect product of groups \cite{robinson2003}.

\begin{definition}
Given a group $G$, a subgroup $H \subseteq G$ and a normal subgroup $N \subseteq G$,
we say that $G$ is the \emph{semidirect product of $N$ and $H$} if and only if $G = N H$ and $N \cap H = e$ where $e \in G$ is the identity.
In this case we write $G = N \rtimes H$.
\end{definition}
Note that for all $g \in G = N \rtimes H$ there are unique $n \in N$ and $h \in H$ such that $g = n h$ \cite{robinson2003}.
With this we can characterize the group $\f{Aut}(\c{G})$ of automorphisms of the metagraph $\c{G}$.

\begin{result} \label{thm:aut_group}
$\f{Aut}(\c{G}) = \Z_2^d \rtimes \Sym_d$ where the groups $\Z_2^d$ and $\Sym_d$ were introduced~\Cref{sec:graph_gps}.
This semidirect product has its own name, called the \emph{hyperoctahedral group}.\footnote{The hyperoctahedral group is defined as the \emph{wreath product} $\Sym_2 \wr \Sym_d = \Sym_2^d \rtimes \Sym_d = \Z_2^d \rtimes \Sym_d$ \cite{kerber2006}.}
\end{result}
\begin{proof}
See~\textcite[Theorem 3.31]{ovchinnikov2011}.
\end{proof}

This mirrors the Euclidean space, where the group of isometries of $\R^d$ is ${\f{E}(d) = \R^d \!\rtimes\! \f{O}(d)}$ where $\R^d$ is the Euclidean addition group acting on $\R^d$ by translations and $\f{O}(d)$ is the group of orthogonal matrices acting on $\R^d$ by rotations.
There is an even deeper connection here.
The set $\cbr{0, 1}$ endowed with addition and multiplication modulo $2$ becomes the \emph{field of characteristic $2$} \cite{dummit2004}.
The addition group of the corresponding vector space $\cbr{0, 1}^d$ coincides with the group $\Z_2^d$, while all the orthogonal matrices over this field consist of only $0$ or $1$ entries, meaning that they are permutation matrices, turning the group $\f{O}(d)$ into $\Sym_d$ in this case.

We are now ready to prove~\Cref{thm:isotropic} from~\Cref{sec:graph_gps}.

\ThmIsotropic*

\begin{proof}
This is an almost immediate corollary of~\Cref{thm:aut_group}.
Since all translations $\cdot \to \cdot \pd z$ and all permutations $\cdot \to \sigma(\cdot)$ are automorphisms of the metagraph~$\c{G}$, the forward implication is obvious.

The backward implication follows from the fact that each element $g \in \f{Aut}(\c{G})$ can be uniquely represented as a product $t h$ where $t$ is a translation by some $z \in \Z_2^d$ and $h$ is a permutation $\sigma \in \Sym_n$, this is a basic property of the semidirect product $\Z_2^d \rtimes \Sym_d$.
\end{proof}

We will further study the metagraph $\c{G}$ through the following notion.

\begin{definition}
Let $G$ be a group, and let $S \subseteq G$ be such subset that $S = S^{-1}$ and $e \not\in S$, where $e \in G$ is the identity element.
Then the \emph{Cayley graph} $\cayley(G, S)$ is the graph with vertex set $V = G$ and with the edge set $E$ such that $(g, g') \in E$ if and only if $g' g^{-1} \in S$ \cite{godsil2001}.
\end{definition}

Define $\v{e}_i \in \Z_2^d$ to be the vectors of zeroes and ones where there is a single $1$, standing at the $i$th position.
Denote $S = \cbr{\v{e}_1, \ldots, \v{e}_d}$.
Our metagraph $\c{G}$, the hypercube graph, may be recognized to be the Cayley graph $\cayley(\Z_2^d, S)$ of the group $\Z_2^d$ with this specific $S$ \cite{godsil2001}.

Consider the space $L^2(\Z_2^d)$ of real-valued functions $f: \Z_2^d \to \R$ with inner product given by
\[
\innerprod{f}{g}_{L^2(\Z_2^d)}
=
\frac{1}{2^d} \sum_{x \in \Z_2^d} f(x) f(x)
\]
This space admits \cite{odonnell2014} an orthonormal basis of \emph{characters} of the group $\Z_2^d$, i.e., functions $\chi: \Z_2^d \to \cbr{0, 1}$ with ${\chi(x \pd y) = \chi(x) \chi(y)}$.
Moreover, this basis is known to be the family $\cbr{w_T}_{T \subseteq \cbr{1, \ldots, d}}$ of Walsh functions given by
\[
w_T(x) = (-1)^{\sum_{t \in T} x_t} \in \cbr{-1, 1}
\]
Note that the number of such functions is $2^{d}$ that matches the dimension of $L^2(\Z_2^d)$.
Representing functions from $L^2(\Z_2^d)$ in terms of this basis is similar to representing periodic functions in form of the classical Fourier series.

Consider the adjacency matrix $\m{A}$ of $\c{G} = \cayley(\Z_2^d, S)$.
We may interpret it as the operator
\[
\m{A}: L^2(\Z_2^d) \to L^2(\Z_2^d)
&&
\text{given by}
&&
(\m{A} f)(x) = \sum_{y \in \Z_2^d} \m{A}_{x y} f(y).
\]
Now we prove the following auxiliary result.

\begin{lemma}
The Walsh functions $w_T: \Z_2^d \to \R$ are eigenfunctions of the operator $\m{A}$ corresponding to eigenvalues $\lambda_T = d - 2 \abs{T}$.
That is $\m{A} w_T = \lambda_T w_T$ with $\lambda_T = d - 2 \abs{T}$.
\end{lemma}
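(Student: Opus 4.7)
The plan is to exploit the Cayley-graph structure of $\c{G} = \cayley(\Z_2^d, S)$ with $S = \cbr{\v{e}_1, \ldots, \v{e}_d}$, which reduces the eigenvalue equation to a one-line character calculation.

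First I would unpack the adjacency relation. By the definition of a Cayley graph, $(x, y) \in \c{E}$ iff $y \pd x^{-1} \in S$; since $\Z_2^d$ has order two, this just says $x \pd y \in S$. Therefore
\[
(\m{A} w_T)(x)
=
\sum_{y \in \Z_2^d} \m{A}_{x y} w_T(y)
=
\sum_{s \in S} w_T(x \pd s).
\]

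Next I would use the fact that each $w_T$ is a character of $\Z_2^d$, i.e.\ $w_T(x \pd s) = w_T(x) w_T(s)$. This factors $w_T(x)$ out of the sum, yielding
\[
(\m{A} w_T)(x) = w_T(x) \sum_{i=1}^{d} w_T(\v{e}_i),
\]
so it only remains to evaluate the scalar $\sum_{i=1}^d w_T(\v{e}_i)$.

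By definition $w_T(\v{e}_i) = (-1)^{\sum_{t \in T} (\v{e}_i)_t}$, which equals $-1$ if $i \in T$ and $+1$ otherwise. Summing over $i \in \cbr{1, \ldots, d}$ therefore produces $(d - \abs{T}) - \abs{T} = d - 2\abs{T}$, giving $\m{A} w_T = \lambda_T w_T$ with $\lambda_T = d - 2\abs{T}$, as claimed. There is no real obstacle: once the Cayley-graph interpretation is in place, the character property of $w_T$ does essentially all of the work, and the rest is a one-term count. The same template will then be reused downstream to deduce the Laplacian eigenvalues in \Cref{thm:lap_spectrum} via $\m{\Delta} = d \m{I} - \m{A}$ (noting $\c{G}$ is $d$-regular), so one should phrase the computation in a way that makes this reuse transparent.
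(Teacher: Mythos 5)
Your proposal is correct and follows essentially the same route as the paper's proof: both use the Cayley-graph description of $\c{G}$ to write $(\m{A} w_T)(x) = \sum_{i=1}^d w_T(x \pd \v{e}_i)$, factor out $w_T(x)$ via the character property, and count the $\pm 1$ values of $w_T(\v{e}_i)$ to get $d - 2\abs{T}$. No meaningful differences.
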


\begin{proof}
For $x \in \Z_2^d$ its inverse $\md x$ is equal to $x$ itself, hence the condition $g' g^{-1} \in S$ from the definition of a Cayley graph turns into $y \md x = y \pd x = \v{e}_i$ for some $i \in \cbr{1, \ldots, d}$.
We have
\[
(\m{A} w_T)(x)
=
\sum_{y \in \Z_2^d} \m{A}_{x y} w_T(y)
=
\sum_{i=1}^d w_T(x \pd \v{e}_i)
=
w_T(x) \sum_{i=1}^d w_T(\v{e}_i).
\]
Write
\[
w_T(\v{e}_i)
=
(-1)^{\sum_{t \in T} (\v{e}_i)_t}
=
\begin{cases}
-1 & i \in T \\
1  & \text{otherwise.}
\end{cases}
\]
Hence, $\lambda_T = \sum_{i=1}^d w_T(\v{e}_i) = (d - \abs{T}) - \abs{T} = d - 2 \abs{T}$ which proves the claim.
\end{proof}

Now we are ready to prove~\Cref{thm:lap_spectrum}

\ThmLapSpectrum*
\begin{proof}
Recall that the Laplacian matrix $\m{\Delta}$ is given by $\m{\Delta} = \m{D} - \m{A}$ where $\m{D}$ is the diagonal degree matrix.
For the metagraph $\c{G}$ we have $\m{D} = d \m{I}$.
It follows that $\m{\Delta}$, interpreted as the operator of the form $\m{\Delta}: L^2(\Z_2^d) \to
L^2(\Z_2^d)$ has $\cbr{w_T}_{T \subseteq \cbr{1, \ldots, d}}$ as its basis of eigenfunctions and
\[
\lambda_j = d - d + 2 \abs{T} = 2 \abs{T}
\]
as the corresponding eigenvalues. 

Recall that $\m{\Delta}_{rw} = \m{I} - \m{D}^{-1} \m{A}$ and $\m{\Delta}_{sym} = \m{I} - \m{D}^{-1/2} \m{A} \m{D}^{-1/2}$ which in our case means that $\m{\Delta}_{rw} = \m{\Delta}_{sym} = \m{\Delta}/d$, hence the remaining part of the claim.
\end{proof}

With these considerations, we are ready to prove the remaining theorems of~\Cref{sec:graph_gps}.

\ThmIsotrKernel*
\begin{proof}
Since $\cbr{w_T}_{T \subseteq \cbr{1, \ldots, d}}$ is an orthonormal basis of $L^2(\Z_2^d)$ we can write
\[
f(x)
=
\sum_{T \subseteq \cbr{1, \ldots, d}}
\varepsilon_T w_T(x),
&&
\varepsilon_T
=
\innerprod{f}{w_T}
=
\frac{1}{2^d}
\sum_{y \in Z_2^d}
f(y) w_T(y),
\]
where $\varepsilon_T$ are Gaussian random variables.
Of course $\E \varepsilon_T = 0$.
Write
\[
\Cov(\varepsilon_T, \varepsilon_{T'})
&=
\frac{1}{2^{2d}}
\sum_{x, y \in Z_2^d}
\Cov(f(x), f(y)) w_{T}(x) w_{T'}(y)
\\
&=
\frac{1}{2^{2d}}
\sum_{x, y \in Z_2^d}
k(x, y) w_T(x) w_{T'}(y)
\]
Note that for any $g \in L^2(Z_2^d)$ and any $z \in \Z_2^d$ we have $\sum_{y \in \Z_2^d} g(z \pd y) = \sum_{y \in \Z_2^d} g(y)$.
Denoting $\v{0}$ to be the graph with no edges, we have
\[
\Cov(\varepsilon_T, \varepsilon_{T'}) &=
\frac{1}{2^{2d}}
\sum_{x \in Z_2^d}
\sum_{y \in Z_2^d}
k(x, x \pd y) w_T(x) w_{T'}(x \pd y)
\\
&=
\frac{1}{2^{2d}}
\sum_{x \in Z_2^d}
\sum_{y \in Z_2^d}
k(\v{0}, y) w_T(x) w_{T'}(x) w_{T'}(y)
\\
&=
\del[2]{
\frac{1}{2^{d}}
\sum_{x \in Z_2^d}
w_T(x) w_{T'}(x)
}
\del[2]{
\frac{1}{2^{d}}
\sum_{y \in Z_2^d}
k(\v{0}, y)  w_{T'}(y)
}.
\]
Since $w_T$ are orthonormal, the last equation shows that $\Cov(\varepsilon_T, \varepsilon_{T'}) = 0$ for $T \not= T'$.
Hence all $\varepsilon_T$ are independent.
If we denote their respective variances $\Var(\varepsilon_T)$ by $\alpha_T$, we get
\[
k(x, y)
=
\sum_{T \subseteq \cbr{1, \ldots, d}} \del{\Var(\varepsilon_T)} w_T(x) w_T(y)
=
\sum_{T \subseteq \cbr{1, \ldots, d}} \alpha_T w_T(x) w_T(y).
\]

Now recall that $w_T(x) = (-1)^{\sum_{t \in T} x_t}$.
For $\sigma \in \Sym_d$ write
\[
w_T(\sigma(x))
=
(-1)^{\sum_{t \in T} x_{\sigma(t)}}
=
(-1)^{\sum_{t \in \sigma(T)} x_{t}}
=
w_{\sigma(T)}(x)
\]
where $\sigma(T)$ denotes the action of $\sigma$ on the set $T$ elementwise.
It follows that
\[
k(\sigma(x), \sigma(y))
=
\sum_{T \subseteq \cbr{1, .., d}} \alpha_T w_{\sigma(T)}(x) w_{\sigma(T)}(y)
\]

Since $\cbr{w_T}_{T \subseteq \cbr{1, \ldots, d}}$ is an orthonormal basis and since $k(x, y) = k(\sigma(x), \sigma(y))$ by assumption, we have $\alpha_T = \alpha_{\sigma(T)}$ for all $\sigma \in \Sym_d$.
Hence, $\alpha_T = \alpha_{T'}$ for all $T, T' \subseteq \cbr{1, \ldots, d}$ such that $\abs{T} = \abs{T'}$.
This proves the claim.
\end{proof}

Recall that $\abs{x} = \sum_{j=1}^d x_d$ and
\[
G_{d, j}(x, y)
=
\sum_{T \subseteq \cbr{1, \ldots, d}, \abs{T} = j}  w_{T}(x) w_{T}(y).
\]
We now prove the recurrence relation governing the values of $G_{d,j}$, enabling us to use a
dynamical program to find them.

\EffComp*
\begin{proof}
First of all, by direct computatiom, $G_{d, 0, m} = 1$, $G_{1, j, m} = (-1)^m$.

Denote $z = x \pd y$ and write
\[
G_{d, j}(x, y)
=
\sum_{T \subseteq \cbr{1, \ldots, d}, |T| = j}
\!\!\!\!\!\!\!\!\!\!\! w_T(x) w_T(y)
=
\sum_{T \subseteq \cbr{1, \ldots, d}, |T| = j}
\!\!\!\!\!\!\!\!\!\!\! w_T(x \pd y)
=
\sum_{T \subseteq \cbr{1, \ldots, d}, |T| = j}
\!\!\!\!\!\!\!\!\!\!\! w_T(z)
.
\]
Suppose $z$ has $m$ ones and $d-m$ zeros.
Since the kernel is invariant with respect to all edge permutations, we can assume that $z = (1, \ldots, 1, 0, \ldots, 0)$, i.e., $z$ has $m$ ones followed by $d-m$ zeros.

The terms in the sum above are $(-1)^\ell$ if $\abs{T \cap \cbr{1, \ldots, m}} = \ell$.
Hence,
\[ \label{eqn:kraw}
G_{d, j}(x, y)
=
G_{d, j, m}
=
\sum_{\ell = \max(0, m+j-d)}^{\min(j, m)} (-1)^\ell \binom{m}{\ell} \binom{d-m}{j-\ell}.
\]
Specifically, if $m=0$, then $G_{d, j, 0} = \binom{d}{j}$.

One can also construct a recurrence relation as follows.
Recall that we assume that $z$ has $m > 0$ ones followed by $d-m$ zeros.
Dividing subsets $T$ of size $j$ into two classes, those that include index $1$ and those that do not, we write
\begin{align*}
G_{d,j}(x, y)
&=
\sum_{T: |T| = j, 1 \in T} w_T(z) + \sum_{T: |T| = j, 1 \not\in T} w_T(z) \\
& = (-1) G_{d-1, j-1, m-1} + G_{d-1, j, m-1}.
\end{align*}
Thus, $G_{d,j,m} = G_{d-1, j, m-1} - G_{d-1, j-1, m-1}$.
\end{proof}
Note that $G_{d, j, m}$ may become quite large, this is already apparent from $G_{d, j, 0} = \binom{d}{j}$.
Because of this, when implementing the dynamical program, it makes sense to consider $G_{d, j, m}' = G_{d, j, m} / \binom{d}{j}$ for which we have $G_{d, 0, m}' = 1$, $G_{1, j, m}' = (-1)^m$ same as before, but now $G_{d, j, 0}' = 1$ and
\[ \label{eq:normalized-dp-recurrence}
G_{d, j, m}'
&=
G_{d, j, m} / \binom{d}{j}
=
G_{d-1, j, m-1} / \binom{d}{j} - G_{d-1, j-1, m-1} / \binom{d}{j}
\\
&=
\frac{d-j}{d} G_{d-1, j, m-1} / \binom{d-1}{j} - \frac{j}{d} G_{d-1, j-1, m-1} / \binom{d-1}{j-1}
\\
&=
\frac{d-j}{d} G_{d-1, j, m-1}' - \frac{j}{d} G_{d-1, j-1, m-1}'.
\]

\section{Gaussian Processes on Spaces of Graph Equivalence Classes}
\label{appdx:classes}

Recall that $\c{V}$ denotes one of the sets $\c{U}_n, \c{UL}_n, \c{D}_n, \c{DL}_n$ of unweighted graphs on $n$ nodes, identified with $\cbr{0,1}^d$ for the appropriate $d$.
Recall that $H \subseteq \Sym_n$ denotes a subgroup of the node permutation group $\Sym_n$.
This $H$ induces the equivalence relation $\sim$ and the set of $\sim$-equivalence classes $\c{V}_{/ H}$.
Recall that $\overline{x} \in \c{V}_{/ H}$ denotes the equivalence class of the element $x \in \c{V}$.

Virtually all the properties of $H$-invariant versions of Gaussian processes and their kernels will be the consequences of the fact that the partition $\c{V} = \cup_{\overline{x} \in \c{V}_{/ H}} \overline{x}$ is \emph{equitable}.
We prove this, after formally introducing this notion and presenting its relevant properties, in the following subsection.

\subsection{Equitable Partitions}

Consider an unweighted undirected graph $G = (V, E)$ with adjacency matrix~$\m{A}_G$.
\begin{definition}
For a set $C \subseteq V$ and a vertex $v \in V$ define $\deg(v, C) = \abs{\cbr{v' \in C : (v, v') \in E}}$.
A~partition $V = \cup_{i=1}^m V_i$  is called \emph{equitable} if and only if
\[
\deg(v, V_i) = \deg(v', V_i)
&&
\text{for all } v, v' \in V_j
&&
\text{and all } j \in \cbr{1, \ldots, m}.
\]
\end{definition}

We may consider the adjacency matrix of $G$ as an operator $\m{A}_G: L^2(V) \to L^2(V)$.

\begin{theorem} \label{thm:equitable_basis}
Consider an equitable partition $V = \cup_{i=1}^m V_i$.
There is an orthonormal basis $\cbr{f_j}_{j=1}^{\abs{V}}$ of $L^2(V)$ consisting of eigenfunctions of $\m{A}_G$ split in two groups: $\cbr{1, \ldots, \abs{V}} = \Lambda_1 \cup \Lambda_2$ such that
\[ \label{eqn:two_groups}
\restr{f_j}{V_i} \equiv c_{i j} \quad \text{if} \quad
j \in \Lambda_1,
&&
\sum_{v \in V_i} f_j(v) = 0 \quad \text{if} \quad
j \in \Lambda_2,
&&
1 \leq i \leq m.
\]
Moreover, $\abs{\Lambda_1} = m$ and $\abs{\Lambda_2} = \abs{V} - m$.
In words, there are $m$ functions in $\Lambda_1$, all of which are piecewise-constant on the
partition, and any function in $\Lambda_2$ has zero average in each part of the partition.
\end{theorem}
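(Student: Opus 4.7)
The plan is to exhibit a decomposition $L^2(V) = W_1 \oplus W_2$ into two $\m{A}_G$-invariant orthogonal subspaces and then diagonalize $\m{A}_G$ on each piece separately by the spectral theorem. Concretely, I would take $W_1$ to be the $m$-dimensional subspace of functions that are constant on every part $V_i$, with the obvious orthogonal basis $\cbr{\mathbf{1}_{V_i}/\sqrt{\abs{V_i}}}_{i=1}^m$, and define $W_2 = W_1^{\perp}$. Since $W_1$ is spanned by the indicator functions $\mathbf{1}_{V_i}$, a function $g \in L^2(V)$ lies in $W_2$ if and only if $\innerprod{g}{\mathbf{1}_{V_i}} = \sum_{v \in V_i} g(v) = 0$ for every $i$, which already encodes the condition used to characterize $\Lambda_2$ in the statement.

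The main verification is that $W_1$ is $\m{A}_G$-invariant, and this is exactly where equitability enters. For $f = \sum_i c_i \mathbf{1}_{V_i} \in W_1$ and a vertex $v \in V_j$, I would compute
\[
(\m{A}_G f)(v) = \sum_{v' \in V} (\m{A}_G)_{v v'} f(v') = \sum_{i=1}^m c_i \deg(v, V_i),
\]
and observe that by the equitable partition property $\deg(v, V_i)$ depends only on $j$ (and $i$), not on which representative $v \in V_j$ we pick, so $\m{A}_G f$ is again piecewise constant on the partition. Invariance of $W_2$ then follows for free from symmetry of $\m{A}_G$: for $g \in W_2$ and $f \in W_1$ one has $\innerprod{\m{A}_G g}{f} = \innerprod{g}{\m{A}_G f} = 0$ since $\m{A}_G f \in W_1$.

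Once both subspaces are $\m{A}_G$-invariant, I would apply the spectral theorem to the self-adjoint restrictions $\m{A}_G|_{W_1}$ and $\m{A}_G|_{W_2}$ separately, obtaining orthonormal bases of eigenfunctions of sizes $\dim W_1 = m$ and $\dim W_2 = \abs{V} - m$ respectively. Concatenating these bases gives the desired $\cbr{f_j}_{j=1}^{\abs{V}}$, and the index split $\Lambda_1 \cup \Lambda_2$ is precisely the split between the two bases, so the conditions in~\Cref{eqn:two_groups} and the cardinalities $\abs{\Lambda_1} = m$, $\abs{\Lambda_2} = \abs{V} - m$ hold by construction.

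No step is particularly deep; the only subtle point, and the one I would flag as the main thing to get right, is that the equitable hypothesis is used exactly once and only for the invariance of $W_1$; invariance of $W_2$ is a purely formal consequence of symmetry of $\m{A}_G$ and therefore does not impose further structural requirements on the partition.
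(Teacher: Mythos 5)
Your proposal is correct and follows essentially the same route as the paper's proof: identify the subspace $W$ of piecewise-constant functions, use equitability to show it is $\m{A}_G$-invariant, and invoke self-adjointness to obtain an eigenbasis split between $W$ and $W^{\perp}$. Your writeup is slightly more explicit about the invariance of the orthogonal complement and the application of the spectral theorem to each restriction, but there is no substantive difference.
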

\begin{proof}
This is a widely known fact (see e.g., \textcite{cozzo2018}), we present here its simple proof, as it is instructive.

Define $\1_{V_i}: V \to \cbr{0, 1}$ to be the indicator of the set $V_i$, i.e., $\1_{V_i}(v) = 1$ if $v \in V_i$ and $\1_{V_i}(v) = 0$ otherwise.
Let us denote by $W = \Span\cbr{\1_{V_i}}_{i=1}^m \subseteq L^2(V)$ the space of functions constant on all sets $V_j$.
It is easy to check that
\[
(\m{A}_G \1_{V_i})(v)
=
\deg(v, V_i).
\]
Since the partition is equitable, the right-hand side, as a function of $v$, is constant on all sets $V_j$ and thus $\m{A}_G \1_{V_i} \in W$.
This means that $W$ is an invariant subspace of the operator $\m{A}_G$.

Since $\m{A}_G$, as a matrix, is symmetric, there exists an orthonormal basis of $L^2(V)$ consisting of its eigenvectors (eigenfunctions).
From classical linear algebra we know that it may be chosen to consist of vectors split in two groups: vectors belonging to the invariant space $W$ and vectors belonging to its orthogonal complement $W^{\orth}$.
The former are constant on all sets $V_j$, meaning that for them the left part of~\Cref{eqn:two_groups} holds, while the latter, since orthogonal to $W$, average to zero over all sets $V_j$, i.e., the right part of~\Cref{eqn:two_groups} holds for them.
Of course $\dim W = m$ and $\dim W + \dim W^{\orth} = \abs{V}$.
This proves the claim.
\end{proof}

Now we turn to the metagraph $\c{G} = (\c{V}, \c{E})$ from~\Cref{sec:graph_gps} and prove that the partition of its vertex set $\c{V}$ into $\sim_H$-equivalence classes is equitable.

\begin{proposition}
For any subgroup $H \subseteq \Sym_{n}$ the partition $\c{V} = \cup_{\overline{x} \in \c{V}_{/ H}} \overline{x}$ generated by the equivalence relation $\sim_H$ is equitable.
\end{proposition}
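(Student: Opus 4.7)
The plan is to exhibit, for any two graphs $x, x'$ lying in the same equivalence class, a bijection of $\c{V}$ that (a) sends $x$ to $x'$, (b) is an automorphism of the metagraph $\c{G}$, and (c) preserves every equivalence class setwise. Once such a map is produced, it carries the set of neighbors of $x$ in any class $\overline{y}$ bijectively onto the set of neighbors of $x'$ in $\overline{y}$, so the two degrees $\deg(x,\overline{y})$ and $\deg(x',\overline{y})$ must coincide, which is precisely the equitability condition.

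The bijection is given by the action of $H$ itself. First I would fix an arbitrary pair $x, x' \in \overline{z}$ and pick $\sigma \in H$ with $\sigma(x) = x'$, which exists by definition of $\sim_H$. Next I would check that the vertex-permutation map $\sigma : \c{V} \to \c{V}$ lies in $\f{Aut}(\c{G})$. This is essentially immediate from the definition of $\c{G}$: two adjacency matrices differ in exactly one entry (respectively, one symmetric pair of entries, depending on which of the four graph sets we are in) if and only if their images under any simultaneous row/column permutation differ in exactly one such entry. Equivalently, in the $\cbr{0,1}^d$ picture, $\sigma$ acts by a permutation of the $d$ coordinates, which clearly preserves Hamming distance $1$.

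The third point, that $\sigma$ stabilizes every equivalence class, follows from the fact that $H$ is a group: if $y' \in \overline{y}$, write $y' = \tau(y)$ with $\tau \in H$; then $\sigma(y') = (\sigma\tau)(y)$ with $\sigma\tau \in H$, so $\sigma(y') \sim_H y$ and hence $\sigma(y') \in \overline{y}$. Applying the same reasoning to $\sigma^{-1} \in H$ shows the restriction $\sigma|_{\overline{y}} : \overline{y} \to \overline{y}$ is a bijection.

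Combining these observations, $\sigma$ restricts to a bijection of $\overline{y}$ to itself while preserving edges of $\c{G}$; therefore it maps the neighbors of $x$ inside $\overline{y}$ bijectively onto the neighbors of $x' = \sigma(x)$ inside $\overline{y}$, giving $\deg(x, \overline{y}) = \deg(x', \overline{y})$ for every class $\overline{y}$. Since $x, x' \in \overline{z}$ and $\overline{y}$ were arbitrary, the partition $\c{V} = \bigcup_{\overline{x} \in \c{V}_{/H}} \overline{x}$ is equitable. There is no real obstacle here; the only thing to keep straight is the distinction between the action of $H \subseteq \Sym_n$ on vertices of the original $n$-node graphs and the induced action on coordinates of the vector representation in $\cbr{0,1}^d$, which is what makes $\sigma$ a genuine automorphism of the metagraph.
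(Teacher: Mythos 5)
Your proposal is correct and follows essentially the same route as the paper: pick $\sigma \in H$ with $\sigma(x) = x'$, observe that $\sigma$ acts as a metagraph automorphism preserving each equivalence class, and conclude that it bijects the neighbors of $x$ in $\overline{y}$ onto those of $x'$ in $\overline{y}$. The paper's version is terser (it does not spell out class-stabilization as a separate step), but the argument is identical.
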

\begin{proof}
If $x, x' \in \overline{x}$ then $x' = \sigma(x)$ for some permutation $\sigma \in H$.
If $(x, y) \in \c{E}$ then $x$ and $y$ differ by a single edge, say $i$th one.
Obviously, then $\sigma(x)$ and $\sigma(y)$ differ by a single edge, the $\sigma(i)$th one.
Thus, of course $(x', \sigma(y)) \in \c{E}$.
The converse also holds: if $(x', \sigma(y))$ is an edge, then $(x, y)$ is an edge because $\sigma^{-1} \in H$.
It follows that $\deg(x, \overline{y}) = \deg(x', \overline{y})$ for all $\overline{y} \in \c{V}_{/ H}$.
\end{proof}

With this, we have necessary tools to prove~\Cref{thm:quotient,thm:hardness} of~\Cref{sec:equiv_gps}, starting with the former.

\subsection{$\Phi$-kernels on Quotient Graphs and the Proof of~\Cref{thm:quotient}}

Put $h = \abs[0]{\c{V}_{/ H}}$.
Let us enumerate graphs in $\c{V}$ by numbers from $1 \leq i \leq 2^d$ and equivalence classes in $\c{V}_{/ H}$ by numbers from $1 \leq j \leq h$: denote the $i$th graph by $g(i)$ and the $j$th class by $c(j)$.

We start by proving the following elementary lemma.
\begin{lemma}
Take $x, z \in \c{V}$ such that $z \in \overline{x}$.
Define
\[
\alpha(x, z)
=
\abs{\cbr{\sigma \in H : \sigma(x) = z}}
\]
Then $\alpha(x, z) = \alpha(x', z')$ for all $x', z' \in \overline{x}$.
In particular, $\alpha(x, z) = \alpha(x, x)$.
\end{lemma}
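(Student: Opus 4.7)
The plan is to reduce the claim to the standard orbit-stabilizer machinery. First, I would fix notation: write $\mathrm{Stab}_H(x) = \{\sigma \in H : \sigma(x) = x\}$ for the stabilizer of $x$ under the action of $H$ on $\c{V}$, and observe that $\overline{x}$ is exactly the $H$-orbit of $x$. With this in hand, the desired constancy of $\alpha$ along orbits should drop out almost immediately.

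Next, I would argue that for any $z \in \overline{x}$, the set $\{\sigma \in H : \sigma(x) = z\}$ is a left coset of $\mathrm{Stab}_H(x)$: fix some $\sigma_0 \in H$ with $\sigma_0(x) = z$, available since $z \in \overline{x}$, and note that $\sigma(x) = z$ is equivalent to $\sigma_0^{-1}\sigma \in \mathrm{Stab}_H(x)$, so the set in question equals $\sigma_0 \cdot \mathrm{Stab}_H(x)$. In particular, $\alpha(x, z) = |\mathrm{Stab}_H(x)|$, independently of the choice of $z \in \overline{x}$. Taking $z = x$ already yields the \emph{in particular} assertion $\alpha(x, z) = \alpha(x, x)$.

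It remains to vary the first argument. Given $x' \in \overline{x}$, write $x' = \tau(x)$ with $\tau \in H$; then $\sigma \in \mathrm{Stab}_H(x')$ iff $\tau^{-1}\sigma\tau \in \mathrm{Stab}_H(x)$, so $\mathrm{Stab}_H(x') = \tau \, \mathrm{Stab}_H(x) \, \tau^{-1}$. Conjugation by $\tau$ is a bijection of $H$, hence $|\mathrm{Stab}_H(x')| = |\mathrm{Stab}_H(x)|$. Combining the two paragraphs, for any $x', z' \in \overline{x}$,
\[
\alpha(x', z') \;=\; |\mathrm{Stab}_H(x')| \;=\; |\mathrm{Stab}_H(x)| \;=\; \alpha(x, z),
\]
which is the claim. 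No step looks like a genuine obstacle: the whole argument is a two-line orbit-stabilizer computation, and the only thing to be careful about is keeping the distinction between left and right cosets/conjugates consistent throughout.
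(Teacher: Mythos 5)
Your proof is correct and rests on the same core observation as the paper's: the set $\{\sigma \in H : \sigma(x) = z\}$ is a left coset of the stabilizer of $x$, hence has cardinality $|\mathrm{Stab}_H(x)| = \alpha(x,x)$. The only (minor) difference is in how the dependence on the first argument is removed --- you conjugate stabilizers via $\mathrm{Stab}_H(\tau(x)) = \tau\,\mathrm{Stab}_H(x)\,\tau^{-1}$, whereas the paper first establishes the symmetry $\alpha(x,z) = \alpha(z,x)$ by inverting permutations and then applies the coset computation once; both routes are equally valid and elementary.
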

\begin{proof}
We have
\[
\alpha(z, x)
=
\abs{\cbr{\sigma \in H : \sigma(z) = x}}
=
\abs{\cbr{\sigma \in H : \sigma^{-1}(x) = z}}
=
\alpha(x, z)
\]
hence it is enough to prove that $\alpha(x, z) = \alpha(x, x)$.
Since $z \in \overline{x}$, we have $z = \sigma_z(x)$.
Write
\[
\alpha(x, z)
&=
\abs{\cbr{\sigma \in H : \sigma(x) = \sigma_z(x)}}
=
\abs{\cbr{\sigma \in H : \sigma_z^{-1}(\sigma(x)) = x}}
\\
&=
\abs{\cbr{\sigma_z^{-1} \sigma \in \sigma_z^{-1} H : \sigma_z^{-1}(\sigma(x)) = x}}
\\
&=
\abs{\cbr{\sigma_z^{-1} \sigma \in H : \sigma_z^{-1}(\sigma(x)) = x}}
=
\abs{\cbr{\sigma \in H : \sigma(x) = x}}
=
\alpha(x, x).\qedhere
\]
\end{proof}

With this, we are ready to prove that $\Pr$ from~\Cref{sec:equiv_gps} is indeed an orthonormal projector.
First, it is obvious that $\Pr \Pr f = \Pr f$ for all $f \in L^2(\c{V})$ and thus $\Pr$ is a projection.
Recall that $W$ denotes the subspace of $L^2(\c{V})$ consisting of functions that are constant on all equivalence classes.
It is easy to see that $\Pr f \in W$ for all $f \in L^2(\c{V})$ and $\Pr f = f$ if $f \in W$.
Finally, let us prove that if $f \in W^{\orth}$, i.e., if $\sum_{x \in c(j)} f(x) = 0$ for all $j = 1, \ldots, h$, then $\Pr f = 0$.
Write
\[
(\f{Pr} f)(x)
&=
\frac{1}{\abs{H}}
\sum_{\sigma \in H}
f(\sigma(x))
=
\frac{1}{\abs{H}}
\sum_{z \in \overline{x}}
f(z) \alpha(x, z)
\\
&=
\frac{1}{\abs{H}}
\sum_{z \in \overline{x}}
f(z) \alpha(x, x)
=
\frac{\alpha(x, x)}{\abs{H}}
\sum_{z \in \overline{x}}
f(z)
=
0.
\]

We will use this to verify~\Cref{thm:quotient} of~\Cref{sec:equiv_gps}.

\ThmQuotient*
\begin{proof}
Consider the adjacency matrix $\m{A}$ of $\c{G}$ as an operator $\m{A}: L^2(\c{V}) \to L^2(\c{V})$ and use~\Cref{thm:equitable_basis} to choose an orthonormal basis $\cbr{f_j}_{j=1}^{2^d}$ of eigenfunctions of $\m{A}$ such that $f_1, \ldots, f_h$ are constant on all equivalence classes and the rest average to zero over all equivalence classes.
Denote the eigenvalues of $\m{A}$ corresponding to $f_j$ by $\lambda_j$ and the eigenvalues of the symmetric normalized Laplacian $\m{\Delta}_{sym}$ by $\lambda_j^{sym} = 1 - \lambda_j/d$.
Then a $\Phi$-class kernel $k$ on $\c{G}$ is given by
\[
k(x, y)
=
\sum_{j=1}^h \Phi(\lambda_j^{sym}) f_j(x) f_j(y)
+
\sum_{j=h+1}^{2^d} \Phi(\lambda_j^{sym}) f_j(x) f_j(y).
\]

Because $\Pr$ is an orthonormal projector onto the space $W$, we have $\Pr f_j = f_j$ for $j \in \cbr{1, \ldots, h}$ and $\Pr f_j = 0$ for $j > h$.
With this, we can write
\[
k_{/ H}(x, y)
=
&\sum_{j=1}^h \Phi(\lambda_j^{sym}) (\f{Pr} f_j)(x) (\f{Pr} f_j)(y) \\
&+
\sum_{j=h+1}^{2^d} \Phi(\lambda_j^{sym}) (\f{Pr} f_j)(x) (\f{Pr} f_j)(y)
\\
=
&\sum_{j=1}^h \Phi(\lambda_j^{sym}) f_j(x) f_j(y).
\]

Let us denote the adjacency matrix of the weighted graph $\c{G}_{/H} = (\c{V}_{/ H}, \c{E}_{/ H})$ by $\m{A}_{/H}$.
Define also the $2^d \x h$ matrix $\m{S}$ with $\m{S}_{i j} = 1$ if $g(i) \in c(j)$ and $\m{S}_{i j} = 0$ otherwise.
Then we have 
\[
(\m{A}_{/H})_{i j}
=
\sum_{x \in c(i)}
\sum_{y \in c(j)}
\m{A}_{x y}
&&
\text{and thus}
&&
\m{A}_{/H} = \m{S}^{\top} \m{A} \m{S}.
\]

Computing the corresponding degree matrix yields
\[
\del{\m{D}_{/H}}_{i i}
=
\sum_{j = 1}^h \del{\m{A}_{/H}}_{i j}
=
\sum_{j = 1}^h
\sum_{x \in c(i)}
\sum_{y \in c(j)}
\m{A}_{x y}
=
\sum_{x \in c(i)}
\sum_{y \in \c{V}}
\m{A}_{x y}
=
\abs{c(i)} d.
\]

If we introduce the diagonal matrix $\m{N}$ with $\m{N}_{i i} = \abs{c(i)}$, then $\m{D}_{/ H} = d \m{N}$.
Note that we have $\m{S} \m{N}^{-1} \m{S}^{\top} f_j = f_j$.
Define $g_j = \m{N}^{-1/2} \m{S}^{\top} f_j \in L^2(\c{V}_{/ H})$ and $\m{\Delta}_{sym / H} = \m{I} - \m{D}_{/ H}^{-1/2} \m{A}_{/ H} \m{D}_{/ H}^{-1/2}$.
We thus have
\[
\m{\Delta}_{sym / H} \, g_j
&=
g_j - \m{D}_{/ H}^{-1/2} \m{S}^{\top} \m{A} \m{S} \m{D}_{/ H}^{-1/2} \m{N}^{-1/2} \m{S}^{\top} f_j
\\
&=
g_j - \m{D}_{/ H}^{-1/2} \m{S}^{\top} \m{A}  d^{-1/2} \m{S} \m{N}^{-1} \m{S}^{\top} f_j
\\
&=
g_j - \m{D}_{/ H}^{-1/2} \m{S}^{\top} \m{A} d^{-1/2} f_j
\\
&=
g_j - \m{D}_{/ H}^{-1/2} \m{S}^{\top} \lambda_j d^{-1/2} f_j
\\
&=
g_j - d^{-1} \lambda_j \m{N}^{-1/2} \m{S}^{\top} f_j
\\
&=
g_j - d^{-1} \lambda_j g_j
=
(1 - d^{-1} \lambda_j) g_j
=
\lambda_j^{sym} g_j.
\]
Hence, the $\Phi$-kernel $k_{\Phi}$ on $\c{G}_{/ H}$ corresponding to the same $\Phi$ and to the symmetric normalized Laplacian is given by
\[
k_{\Phi}(\overline{x}, \overline{y})
=
\sum_{j=1}^h \Phi(\lambda_j^{sym}) g_j(\overline{x}) g_j(\overline{y}).
\]

Denote $\psi(\overline{x}) = \abs{\overline{x}}^{-1/2}$.
It is easy to see by definition of $g_j$ that we have $f_j(x) = \psi(\overline{x}) g_j(\overline{x})$ for indices $j \in \cbr{1, \ldots, h}$.
Thus, obviously,
\[
k_{/ H}(x, y)
=
\psi(\overline{x})\psi(\overline{y})
k_{\Phi}(\overline{x}, \overline{y})
\]
which proves the claim.
\end{proof}

\subsection{The Hardness Result}

In this section we prove~\Cref{thm:hardness}.
We start with a simple lemma inspired by~\textcite{gartner2003}.

\begin{lemma} \label{thm:injective}
Consider a kernel $k: \c{V} \x \c{V} \to \R$ such that $k(\sigma_1(x), \sigma_2(y)) = k(x, y)$ for all permutations $\sigma_1, \sigma_2 \in H$.
Define the kernel $\tilde{k}: \c{V}_{/ H} \x \c{V}_{/ H} \to \R$ by $\tilde{k}(\overline{x}, \overline{y}) = k(x, y)$ and assume that $\tilde{k}(\overline{x}, \overline{y}) = \innerprod{\phi(\overline{x})}{\phi(\overline{y})}$ for a certain feature map $\phi: \c{V}_{/ H} \to \R^l$, $l \in \N$.
If $\phi$ is injective, i.e., if $\phi(\overline{x}) = \phi(\overline{y})$ implies $\overline{x} = \overline{y}$, then $k(x, y) = (k(x, x) + k(y, y)) / 2$ implies $x \sim y$.
\end{lemma}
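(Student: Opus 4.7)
The plan is to descend the given identity to the quotient level, translate it into a statement about the feature map $\phi$, and then invoke injectivity.

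First, I would check that $\tilde{k}$ is well-defined. By the hypothesis $k(\sigma_1(x), \sigma_2(y)) = k(x,y)$ for all $\sigma_1,\sigma_2 \in H$, the value $k(x,y)$ depends only on the $H$-equivalence classes $\overline{x}, \overline{y}$, so the assignment $\tilde{k}(\overline{x},\overline{y}) = k(x,y)$ does not depend on the choice of representatives. In particular, the hypothesis $k(x,y) = \tfrac{1}{2}\bigl(k(x,x) + k(y,y)\bigr)$ rewrites as
\[
\tilde{k}(\overline{x}, \overline{y}) = \frac{\tilde{k}(\overline{x}, \overline{x}) + \tilde{k}(\overline{y}, \overline{y})}{2}.
\]

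Next, I would plug in the feature-map representation $\tilde{k}(\overline{x}, \overline{y}) = \innerprod{\phi(\overline{x})}{\phi(\overline{y})}$ on both sides. The identity becomes
\[
\innerprod{\phi(\overline{x})}{\phi(\overline{y})} = \frac{\norm{\phi(\overline{x})}^2 + \norm{\phi(\overline{y})}^2}{2},
\]
which, upon rearranging, is exactly $\norm{\phi(\overline{x}) - \phi(\overline{y})}^2 = 0$. Hence $\phi(\overline{x}) = \phi(\overline{y})$.

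Finally, injectivity of $\phi$ yields $\overline{x} = \overline{y}$, i.e., $x \sim_H y$, which is the desired conclusion. There is no real obstacle here: the content is purely the standard ``kernel-distance vanishes $\Leftrightarrow$ feature vectors coincide'' identity, plus the bookkeeping of passing between $k$ on $\c{V}$ and $\tilde{k}$ on $\c{V}_{/H}$ guaranteed by $H$-invariance. The only subtle point worth flagging in the write-up is the well-definedness of $\tilde{k}$, since it is what turns the symmetric-$H$-invariance hypothesis into a genuine quotient-level statement that can be combined with the feature map.
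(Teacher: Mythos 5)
Your proof is correct and is essentially identical to the paper's: both pass to $\tilde{k}$ on the quotient, rewrite $k(x,x)+k(y,y)-2k(x,y)$ as $\norm{\phi(\overline{x})-\phi(\overline{y})}^2$, and conclude via injectivity. Your explicit remark on the well-definedness of $\tilde{k}$ is a small piece of bookkeeping the paper leaves implicit, but it does not change the argument.
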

\begin{proof}
Write
\[
k(x, x) + k(y, y) - 2 k(x, y)
&=
\tilde{k}(\overline{x}, \overline{x}) + \tilde{k}(\overline{y}, \overline{y}) - 2 \tilde{k}(\overline{x}, \overline{y})
\\
&=
\innerprod{\phi(x) - \phi(y)}{\phi(x) - \phi(y)}
\\
&=
\norm{\phi(x) - \phi(y)}^2
\]
This means that $k(x, y) = (k(x, x) + k(y, y))/2$ is equivalent to $\phi(\overline{x}) = \phi(\overline{y})$, which is by assumption is equivalent to $\overline{x} = \overline{y}$, i.e., $x \sim y$.
\end{proof}

It is therefore enough to evaluate such a kernel $k$ at three pairs of inputs to check whether $x \sim y$.
In particular, if $\sim$ is the graph isomorphism relation $\isom$ (i.e., $H = \Sym_n$), this shows that computing the kernel pointwise is at least as hard as resolving the graph isomorphism problem.

We prove~\Cref{thm:hardness} by showing that the $H$-invariant version of a $\Phi$-kernel corresponding to a strictly positive $\Phi$ may be represented via an injective feature map, a consequence of the partition $\c{V} = \cup_{\overline{x} \in \c{V}_{/ H}} \overline{x}$ of the set of graphs into sets of equivalence classes being equitable.

\ThmHardness*
\begin{proof}
From~\Cref{thm:quotient} we know that $k_{/ H} = \psi(x) \psi(y) k_{\Phi}(\overline{x}, \overline{y})$ where $k_{\Phi}$ is the $\Phi$-kernel on the quotient graph $\c{G}_{/ H}$ corresponding to the the symmetric normalized Laplacian and the same $\Phi$ as~$k$.
Recall the notation $h = \abs{\c{V}_{/ H}}$.
It follows that for an orthonormal basis $\cbr{f_j}_{j=1}^h \in L^2(\c{V}_{/ H})$ of eigenfunctions $\m{\Delta}_{sym} f_j = \lambda_j f_j$ we have
\[ \label{eqn:k_sum_hardness}
k_{/ H}(x, y)
=
\sum_{j=1}^h
\Phi(\lambda_j)
\psi(\overline{x})
f_j(\overline{x})
\psi(\overline{y})
f_j(\overline{y})
&&
\Phi(\lambda_j) > 0.
\]
Consider the function $\tilde{k}_{/ H}: \c{V}_{/ H} \x \c{V}_{/ H} \to \R$ given by $\tilde{k}_{/ H}(\overline{x}, \overline{y}) = k_{/ H}(x, y)$.
Then, by~\Cref{eqn:k_sum_hardness}, it may be represented as $\tilde{k}_{/ H}(\overline{x}, \overline{y}) = \innerprod{\phi(\overline{x})}{\phi(\overline{y})}$ with $\v{\phi}: \c{V}_{/ H} \to \R^h$ given by
\[
\v{\phi}(\overline{x})
=
\del{
\Phi(\lambda_1)^{1/2} \psi(\overline{x}) f_1(\overline{x}),
\ldots,
\Phi(\lambda_h)^{1/2} \psi(\overline{x}) f_h(\overline{x})
}^{\top}.
\]

We now prove that $\v{\phi}$ is injective.
Since the functions $f_j$ form an orthonormal basis in $L^2(\c{V}_{/ H})$, the functions $f_j'(\overline{x}) = \Phi(\lambda_j)^{1/2} \psi(\overline{x}) f_j(\overline{x})$ form a linear basis of the same space.
If we assume that $\v{\phi}(\overline{x}) = \v{\phi}(\overline{y})$ for $\overline{x} \not= \overline{y}$, then the function $\1_{\overline{x}}: \c{V}_{/ H} \to \R$ is not representable in the this basis because for all $f \in \Span \cbr{f_j'}_{j=1}^h$ our assumption implies $f(\overline{x}) = f(\overline{y})$, hence a contradiction.
Now the claim follows from~\Cref{thm:injective} and the remarks afterwards by virtue of $\v{\phi}$ being injective.
\end{proof}

\section{Additional Experimental Details} \label{appdx:experiments}

\subsection{Computational Considerations}
To ensure numerical stability during the dynamic program pre-computation
\eqref{eq:g-d-j-m-recurrence}, we use a normalizing factor of $\binom{d}{j}$ (as done in
\eqref{eq:normalized-dp-recurrence}). This factor must be accounted for in the kernel evaluation. We normalize the kernel such that $k(x, x) = 1$ to ensure that $\binom{d}{j}$ factor does not cause numerical instability during hyperparameter optimization. In code, the kernel normalization is performed in the log scale using the Log-Sum-Exp (LSE) trick. 

While some of the computations, especially those over possible hamming distances in the \textsc{Projected} case, can be made more efficient, the preliminary version of our implementation precomputes the hamming distances between possible permutations over equivalence classes in the $\textsc{Projected}$ and caches them to be reused during training and hyperparameter optimization.

\subsection{Data Preparation}
We use the \textsc{FreeSolv} dataset as provided by \texttt{Pytorch Geometric} under the
\texttt{MoleculeNet} dataset. The dataset consists of 642 molecules with experimentally determined hydration free energy values. We remove invalid molecules, and molecules with only a single atom, and follow a random train/test split of 80/20 to obtain 510 examples for training and 128 examples for testing. The limited data setting here matches the typical application settings for Gaussian processes.

To test the exact projected Gaussian processes, we consider a smaller subset of \textsc{FreeSolv} (\textsc{FreeSolv-S}). Here, we allow molecules whose constituent atoms are within the allowed set of four atom types (carbon, nitrogen, oxygen, chlorine), with a maximum of 3 atoms per atom type. Following a similar train/test split of 80/20 gives us 52 training and 13 text examples. 

For both \textsc{FreeSolv} \& \textsc{FreeSolv-S}, the data normalized by subtracting the mean and dividing by the corresponding standard deviation. The normalization is only performed for the hydration free energies. 

\subsection{Hyperparameter Tuning}
We optimize the hyperparameters of the kernel during training. For the Heat kernel in both the \textsc{Graph} and \textsc{Projected} settings, the parameters $\kappa$ and $\sigma^2$ are optimized. For the Mat\'ern kernel under similar settings, we also optimized over the additional parameter $\nu$. We experimented with initial values of $1.0$ and $2.0$ for $\kappa$ for both the Heat and Mat\'ern kernels, and observed that the training procedure converged to the same results for both situations. During model training, we also used learning rates of $0.1$ and $0.001$, with the higher learning rate improving the speed of convergence in most settings.

Another phenomenon we noticed was that the Mat\'ern kernel for typical $\nu$ values of $0.5, 1.5$ or
$2.5$, decays very quickly, especially as $n$ (and $d$) increase. The decay behavior is more
reasonable for larger values of $\nu$. In our experiments, we therefore set $\nu = \frac{d}{2} +
\nu_{base}$ where $\nu_{base} \in \{0.5, 1.5, 2.5\}$. Figure~\ref{fig:graph_kernels} shows this decaying behavior of $\nu$ for different $n$.

\end{document}